\def\eqref#1{equation~\ref{#1}}
\def\1{\bm{1}}
\def\vtheta{{\bm{\theta}}}
\def\vx{{\bm{x}}}
\DeclareMathAlphabet{\mathsfit}{\encodingdefault}{\sfdefault}{m}{sl}
\SetMathAlphabet{\mathsfit}{bold}{\encodingdefault}{\sfdefault}{bx}{n}
\def\gD{{\mathcal{D}}}
\def\gL{{\mathcal{L}}}
\def\gX{{\mathcal{X}}}
\def\gY{{\mathcal{Y}}}
\DeclareMathOperator*{\argmin}{arg\,min}
\newtheorem{theorem}{Theorem}
\newtheorem{definition}{Definition}
\title{Mitigating Parameter Interference in Model Merging via Sharpness-Aware Fine-tuning}
\author{Yeoreum Lee${^{1}}$, Jinwook Jung${^{1}}$ \& Sungyong Baik${^{1,2}}$\thanks{Corresponding author.} \\
$^1$ Dept. of Artificial Intelligence, $^2$ Dept. of Data Science\\
Hanyang University\\
\texttt{\{leeyeoreum01,jjw970517,dsybaik\}@hanyang.ac.kr} \\
}
\theoremstyle{definition}
\begin{document}

\maketitle

\begin{abstract}
Large-scale deep learning models with a pretraining-finetuning paradigm have led to a surge of numerous task-specific models fine-tuned from a common pre-trained model.
Recently, several research efforts have been made on merging these large models into a single multi-task model, particularly with simple arithmetic on parameters.
Such merging methodology faces a central challenge: interference between model parameters fine-tuned on different tasks.
Few recent works have focused on designing a new fine-tuning scheme that can lead to small parameter interference, however at the cost of the performance of each task-specific fine-tuned model and thereby limiting that of a merged model.
To improve the performance of a merged model, we note that a fine-tuning scheme should aim for (1) smaller parameter interference and (2) better performance of each fine-tuned model on the corresponding task.
In this work, we aim to design a new fine-tuning objective function to work towards these two goals.
In the course of this process, we find such objective function to be strikingly similar to sharpness-aware minimization (SAM) objective function, which aims to achieve generalization by finding flat minima.
Drawing upon our observation, we propose to fine-tune pre-trained models via sharpness-aware minimization.
The experimental and theoretical results showcase the effectiveness and orthogonality of our proposed approach, improving performance upon various merging and fine-tuning methods.
Our code is available at \url{https://github.com/baiklab/SAFT-Merge}.

\end{abstract}

\section{Introduction}
\label{sec:introduction}
Foundation model, a large deep learning model pre-trained on large-scale datasets, has shown great advancement across a wide range of downstream tasks, after fine-tuning on each task~\citep{achiam2023gpt,saab2024capabilities,ding2023parameter}.
Recent successes of the pretraining-finetuning paradigm have given rise to a burst of task-specific open-source models in communities, such as Hugging Face.
Diversity yet ready availability of large task-specific models have naturally elicited a question from researchers: Can we combine these large models into one, while retaining the performance on each task?

Traditionally, a single multi-task model is obtained by jointly training on data across all tasks~\citep{caruana1997multitask, crawshaw2020multi, vandenhende2022multi}.
However, given the size of foundation models and the number of tasks, joint training on all tasks incurs significant computational costs.
Motivated by the accessibility, variety, abundance, and common origin of task-specific models, several research efforts have focused on merging multiple fine-tuned models into a single model via simple arithmetic on parameters of these models, thereby removing the need for joint training~\citep{ilharco2023editing, yadav2023ties, yang2024adamerging, matena2022merging, jin2023dataless, daheim2024model, li2023deep, yang2024model}.
However, a central challenge remains: parameters of different task-specific models interfere or conflict with each other, leading to the performance degradation of a merged multi-task model on each task.

To bridge such performance gap, several works have tried to reduce the parameter interference during the process of merging~\citep{yadav2023ties,jin2023dataless,yang2024adamerging,wang2024localizing,yu2024language}.
Another line of works focuses on finding a new fine-tuning scheme that results in task-specific models whose parameters have lower parameter interference (also often referred to as better weight disentanglement with respect to model outputs)~\citep{ortiz2023task,tang2023parameter,jin2024fine} and thus less performance degradation after merging.
{Few studies~\citep{wortsman2022model,ilharco2023editing,wortsman2022robust}} suggest that the effectiveness of linear arithmetic on parameters in the process of merging may be owed to the linearity of fine-tuning process.
Conversely,~\cite{ortiz2023task} have refuted such hypothesis by showing that there is a huge performance drop from the approximation of fine-tuned models with a linearized pre-trained model.
Another observation they make is that such post-hoc linearized models led to less parameter interference.
Based on this observation, few recent works~\cite{ortiz2023task,tang2023parameter,jin2024fine} have tried to explicitly linearize fine-tuning processes in order to induce weight disentanglement.

In this work, we note that we need to simultaneously work towards two goals for effective model merging: (1) \textit{reducing parameter interference between fine-tuned models} while (2) \textit{maintaining the performance of task-specific fine-tuned models on respective datasets}.
Therefore, during fine-tuning process, we aim to directly optimize for both performance on each task and weight disentanglement with respect to performance.
In the course of designing a fine-tuning objective function that aligns with our goals, we find striking resemblances between our goals and sharpness-aware minimization (SAM)~\citep{foret2020sharpness}, which aims for better generalization by finding flat minima via minimization of both loss values and loss sharpness.
In particular, we find the similarities between the minimization of both loss values and loss sharpness in SAM and joint optimization for performance and weight disentanglement of fine-tuned models in our goal.

Drawing upon our observations, we propose to obtain task-specific models from pre-trained models via sharpness-aware fine-tuning (SAFT), in order to achieve better performance on each task, lower parameter interference, and thus better overall performance of a merged multi-task model.
Our extensive experimental results demonstrate that our proposal greatly improves the overall performance of a merged model.
The effectiveness of our proposed method is owed to achieving better performance of each task-specific model and less performance gap between task-specific models and a merged model.
We further highlight the generalizability and orthogonality of our approach by demonstrating performance improvements when applied together with various merging methods and fine-tuning methods for model merging.

\section{Related works}
\label{sec:related_works}
\paragraph{Model merging.}
The recent emergence of large foundation models and pretraining-finetuning paradigm has motivated researchers to explore ways of merging multiple task-specific models into a single multi-task model without re-training.
Model merging, the merging of models with simple arithmetic on parameters, has garnered a significant amount of attention for its flexibility and simplicity.
However, parameters of different task-specific models may interfere with each other during merging process, resulting in performance degradation on each task, compared to task-specific models.
To address the parameter interference issue, researchers focus on either designing a merging process~\citep{utans1996weight,ilharco2023editing,yadav2023ties} or designing a fine-tuning process to mitigate the parameter interference.
Initiated with simple averaging~\citep{utans1996weight,shoemake1985animating}, research works on merging process focus on representing task-specific models as task vectors for easier manipulation of knowledge~\citep{ilharco2023editing}, or weighting parameters~\citep{matena2022merging,jin2023dataless,yang2024adamerging} or selecting parameters~\citep{yadav2023ties,wang2024localizing,yu2024language} according to the estimated importance of each parameter with respect to given tasks.
In parallel, if task-specific model parameters have less interference with each other to begin with, the effectiveness of model merging can be amplified.
As such, few recent works have focused on designing a fine-tuning process such that resulting fine-tuned model parameters will have less interference and result in less performance gap between a merged model and task-specific models.
\cite{ortiz2023task} show that linearized fine-tuning (fine-tuning in the space tangent to pre-trained initialization) leads to less interference (specifically better weight disentanglement with respect to model outputs), aspring other linearized fine-tuning methods~\citep{jin2024fine,tang2023parameter}.

\paragraph{Sharpness-aware minimization (SAM).}
\cite{foret2020sharpness} introduce a new optimization objective function that minimizes both loss and loss sharpness to seek flat loss minima that may lead to better generalization performance.
SAM defines loss sharpness as the maximum loss difference measured at current parameters and nearby parameters (obtained by perturbing current parameters).
Several follow-up works have strived to improve SAM via improving perturbation methods~\citep{mi2022make,kwon2021asam}, improving gradient~\citep{wang2023sharpness,zhao2022penalizing}, or combining with other flatness-aware optimizers~\citep{cha2021swad,kaddour2022when} for better generalization.
While previous studies have primarily focused on single-task learning, \cite{phan2022improving} incorporate SAM into joint multi-task training as a regularization technique for multi-task learning.
We note that our method and \cite{phan2022improving} target two different scenarios.
\cite{phan2022improving} target a traditional multi-task learning scenario, where the training is performed on all tasks jointly.
By contrast, our work tackles multi-task model merging, where the goal is to merge different task-specific models, each of which is independently fine-tuned from a common pre-trained model without the knowledge of other tasks.
This approach eliminates the need for joint training on all tasks at the same time and avoids retraining from scratch when new tasks are introduced.
In multi-task model merging, the lack of knowledge of other tasks also presents several challenges, such as parameter interference between different task-specific models that cause degradation of single-task performance after merging.

In this work, we introduce a new objective function for single-task fine-tuning aimed for model merging, from which we present a new insight that draws connections between the objective of multi-task model merging and that of sharpness-aware minimization (SAM).
Furthermore, we theoretically (in Appendix~\ref{apdix:d}) and empirically show that, sharpness-aware fine-tuning can reduce parameter interference, even without the knowledge of other tasks during fine-tuning.

\section{Background}
\label{sec:background}
\textbf{Sharpness-aware minimization (SAM).}
To achieve better generalization, SAM~\citep{foret2020sharpness} seeks for wider minima by minimizing both loss value and loss sharpness during optimization, where the loss sharpness is formulated as a difference between a loss at the current parameters and the maximum loss value at nearby parameter values:
\begin{equation}
\label{eq:sam_original}
\min_\vtheta\Biggl[\,\underbrace{\max_{\bm{\epsilon}:\lVert\bm{\epsilon}\rVert_2 \leq \rho} \gL(\vtheta + \bm{\epsilon}; \gD) - \gL(\vtheta; \gD)}_\text{loss sharpness}\,\Biggr] + \underbrace{\gL(\vtheta; \gD)}_\text{loss},
\end{equation}
where $\bm{\epsilon}$ is a perturbation vector which is bounded above by a predefined $\rho$ that controls the radius of the neighborhood; and $\vtheta$ are network parameters to be optimized for a given loss function $\gL$ over a dataset $\gD$.
For efficiency,~\cite{foret2020sharpness} approximate the inner maximization via Taylor approximation.
Then, along with canceling identical terms $\gL(\vtheta; \gD)$ with opposite signs, the original optimization is reduced to
\begin{equation}
    \label{eq:sam_approx}
    \min_\vtheta\gL(\vtheta + \hat{\bm{\epsilon}}; \gD) \quad \text{where} \quad \hat{\bm{\epsilon}} \triangleq \rho\frac{\nabla_\vtheta\gL(\vtheta;\gD)}{\lVert\nabla_\vtheta\gL(\vtheta; \gD)\rVert}.
\end{equation}
The perturbations within the same neighborhood radius for all parameters may impact each parameter differently, especially if their scales differ by several factors.
To take such varying scales of parameters into account, Adaptive SAM (ASAM)~\citep{kwon2021asam} proposes to scale the perturbation vector $\bm{\epsilon}$ according to the scale of each parameter as $\hat{\bm{\epsilon}}_\text{ASAM} \triangleq \rho\frac{\vtheta^2\nabla_\vtheta\gL(\vtheta;\gD)}{\lVert\nabla_\vtheta\gL(\vtheta; \gD)\rVert}$.

Adjusting the scale of perturbations according that of parameters can be even more effective in large foundation models with the pretraining-finetuning paradigm, since large pre-trained models likely have a large number of parameters of different scales after training on large-scale datasets. 

\textbf{Problem setting.} In the pretraining-finetuning paradigm, there exists a large pre-trained model $f:\gX\times\Theta\to\gY$, parameterized by trained parameters $\vtheta_0\in\Theta$, that is in turn fine-tuned to $T$ downstream tasks.
Each downstream task, indexed by $t$, is accompanied with a dataset $\gD^{(t)}=\{(\vx^{(t)}_i, y^{(t)}_i)\}^{N_t}_{i=1}$, where $\vx^{(t)}_i\in X^{(t)} \subseteq\gX$ is an input with a corresponding label $y^{(t)}_i\in Y^{(t)}\subseteq\gY$.
Employing a standard loss function (e.g., cross-entropy loss for classification) and an optimizer (e.g., SGD), fine-tuning a pre-trained model $f_{\vtheta_0}$ to each downstream task $t$ will lead to a task-specific model $f_{\vtheta_t}$ with its parameters $\vtheta_t$:
\begin{equation}
    \label{eq:standard_loss}
    %\gL(\vtheta; \gD^{(t)})
    \vtheta_t = \argmin_{\vtheta}\gL(\vtheta; \gD^{(t)}).
\end{equation}    

\textbf{Task arithmetic.} To merge models into a single model by performing an arithmetic on model parameters,~\cite{ilharco2023editing} have introduced the concept of task vector, which is essentially a vector pointing to task-specific parameters $\vtheta_t$ from pre-trained model parameters $\vtheta_0$, obtained by taking a difference between them: $\bm{\tau}_t=\vtheta_t-\vtheta_0$.
\cite{ilharco2023editing} note that a task vector $\bm{\tau}_t$ can be considered as the representation of the knowledge learned for a task $t$.
As such, they claim that the knowledge of each task can be manipulated by a simple arithmetic on pre-trained model parameters: $\vtheta_0 + \alpha_t\vtheta_t$, where $\alpha_t>0$ will add the knowledge of task $t$ while $\alpha_t<0$ will result in forgetting the knowledge of task $t$, while $|\alpha_t|$ controls the extent of learning/forgetting.
Using these task vectors $\bm{\tau}_t$ with corresponding task coefficients $\alpha_t$, task-specific models can be merged into a merged multi-task model, parameterized by $\vtheta_\text{merge}$ as follows:
\begin{equation}
    \label{eq:task_arithmetic}
    \vtheta_\text{merge} =  \vtheta_0 + \sum_{t=1}^T \alpha_t\bm{\tau}_t.
    %\vtheta' =  \vtheta_0 + \sum_{t=1}^T \alpha_t\bm{\tau}_t,
\end{equation}

\section{Mitigating parameter interference via sharpness-aware fine-tuning}
\label{sec:proposed_method}
Since a merged model is formed by simply performing linear arithmetic on task vectors, there is a high chance for interference among tasks~\citep{ilharco2023editing}.
Such interference leads to the performance degradation on downstream tasks after merging.
Some works focus on reducing interference during merging process, which is a challenging task as fine-tuned model parameters are fixed.
On the other hand, few recent works propose to modify a fine-tuning process that results in task-specific models whose parameters have less interference with each other.
In particular, they show that fine-tuning a (partially) linearized model or only its linear layers results in less interference.
However, such linearization of fine-tuning process results in the performance degradation of each task-specific model, limiting the overall performance of a merged model.

In this work, we claim that we need to achieve both (1) \textit{less performance gap between a merged model and each fine-tuned model (i.e., less parameter interference)} and (2) \textit{generalization performance of each fine-tuned model} on each respective dataset.
As such, we aim to design a new objective function for fine-tuning to achieve these two objectives:
\begin{equation}
\label{eq:motivation}
    \vtheta_t = \argmin_{\vtheta}\underbrace{\gL(\vtheta_\text{merge}(\vtheta);\,\gD^{(t)}) - \gL(\vtheta;\,\gD^{(t)})}_\text{Objective (1)} + \underbrace{\gL(\vtheta;\,\gD^{(t)})}_\text{Objective (2)},
\end{equation}
where $\vtheta_\text{merge}(\vtheta)$ is to demonstrate that $\vtheta_\text{merge}$ changes as $\vtheta$ is optimized, while considering parameters for other tasks to be fixed.
While this objective function already looks similar to the SAM objective function in Equation~\ref{eq:sam_original}, after some simplifications (deriviations are delineated in Appendix~\ref{app_sec:deriv}), we get the final objective function as follows:
\begin{equation}
\label{eq:proposal}
    \vtheta_t = \argmin_{\vtheta}\gL(\vtheta + \sum_{s\neq t}\alpha_s\bm{\tau}_s + (\alpha_t-1)\bm{\tau} ; \,\gD^{(t)}),
\end{equation}
where $\sum_{s\neq t}\alpha_s\bm{\tau}_s + (\alpha_t-1)\bm{\tau}$ represents the parameter offsets a model merging process would introduce to the parameters of a task-specific model for a task $t$.
Hence, $\sum_{s\neq t}\alpha_s\bm{\tau}_s + (\alpha_t-1)\bm{\tau}$ can be considered as perturbations that would cause parameter interference during model merging, from the perspective of each task-specific model.
However, we do not assume access to other tasks while fine-tuning on each task, as each task-specific model is independently trained.
Since other tasks are unknown (and thus $\sum_{s\neq t}\alpha_s\bm{\tau}_s$ is unknown), we consider $\sum_{s\neq t}\alpha_s\bm{\tau}_s + (\alpha_t-1)\bm{\tau}$ to be random perturbations.
Furthermore, because the perturbation $\sum_{s\neq t}\alpha_s\bm{\tau}_s + (\alpha_t-1)\bm{\tau}$ depends on $\bm{\tau}=\vtheta-\vtheta_0$ and is thus varying during training, we use ASAM that models $\hat{\epsilon}$ as parameter-dependent perturbation. 
In other words, we use $\hat{\bm{\epsilon}}_\text{ASAM}$ as a surrogate of $\sum_{s\neq t}\alpha_s\bm{\tau}_s + (\alpha_t-1)\bm{\tau}$, approximating our objective function for fine-tuning aimed for model merging (Equation~\ref{eq:motivation}) as
\begin{equation}
    \label{eq:our_sam_approx}
    \min_\vtheta\gL(\vtheta + \hat{\bm{\epsilon}}; \gD) \quad \text{where} \quad \hat{\bm{\epsilon}} = \rho\frac{\vtheta^2\nabla_\vtheta\gL(\vtheta;\gD)}{\lVert\nabla_\vtheta\gL(\vtheta; \gD)\rVert}.
\end{equation}

From our perspective described above, we can consider parameter interference to be caused by parameter perturbations $\sum_{s\neq t}\alpha_s\bm{\tau}_s + (\alpha_t-1)\bm{\tau}$ that would be introduced during model merging, the information of which is however not available during fine-tuning for each task. 
The perturbations will bring a model to a new location in the loss landscape, away from the found local minimum. 
If the region around the local minimum is not flat enough, the new location (i.e., merged model parameters) brought by perturbations will most likely have a higher loss, resulting in a large performance gap between a merged model and a task-specific model. 
In other words, to minimize the interference caused by parameter perturbations, it is essential to identify flat minima. 
Flat minima can effectively prevent the loss from increasing after parameter perturbations (e.g., model merging). 
Thus, we argue that finding flat minima (or equivalently, minimizing sharpness) via sharpness-aware fine-tuning (SAFT) can greatly reduce parameter interference.
\section{Empirical and theoretical analysis}
\label{sec:weight_disent}
\begin{figure}[h]
  \centering
  \vspace{-1em}
  \includegraphics[width=\linewidth]{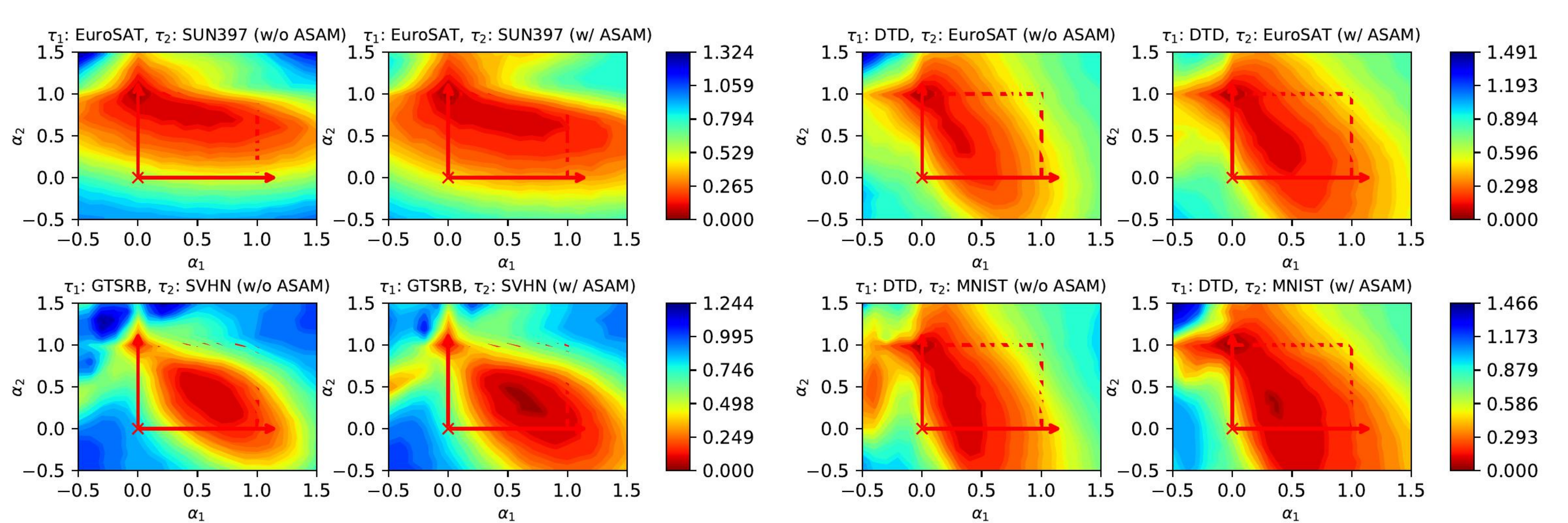}
  \vspace{-1.5em}
  \caption{
  \textbf{Weight disentanglement visualization of two task-specific models across two tasks.}
  Each pixel in the heatmap corresponds to the weight disentanglement error $\xi(\alpha_1, \alpha_2)$ between a two-task-merged model, parameterized by $\vtheta_{\text{merge}} = \vtheta_0 + \alpha_1 \bm{\tau}_1 + \alpha_2 \bm{\tau}_2$, and two task-specific models, parameterized by $\vtheta_0 + \alpha_1\bm{\tau}_1$ and $\vtheta_0 + \alpha_2\bm{\tau}_2$, evaluated on task $1$ and task $2$.
  We use CLIP ViT-B/32 on EuroSAT-SUN397, DTD-EuroSAT, GTSRB-SVHN, and DTD-MNIST task pairs to plot these visualizations. 
  The red box highlights the search space used to find the optimal task coefficients $\{\alpha_1, \alpha_2 \}$ for task arithmetic.
  }
  \label{fig:wd}
\end{figure}

\begin{figure}
  \centering
  \includegraphics[width=\linewidth]{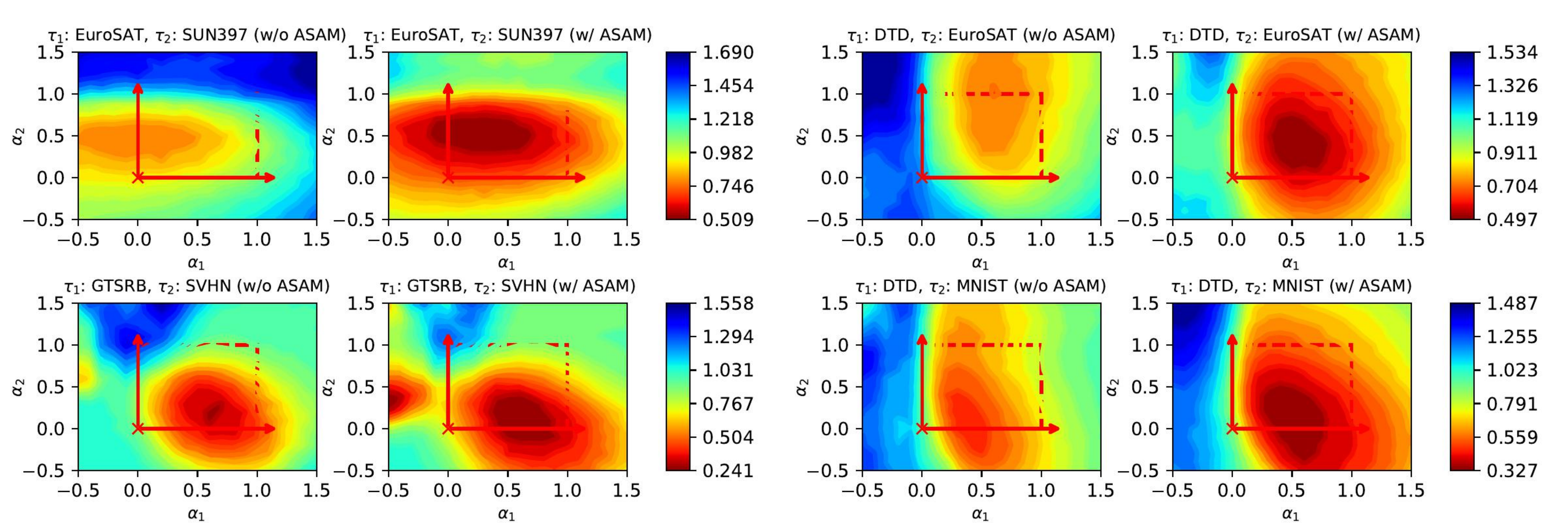}
  \vspace{-1.5em}
  \caption{
  \textbf{Weight disentanglement visualization of eight-task-merged models across two tasks}.
  Each pixel in the heatmap corresponds to the disentanglement error $\xi(\alpha_1, \alpha_2)$ between an eight-task-merged model, parameterized by $\vtheta_{\text{merge}} = \vtheta_0 + \alpha_1 \bm{\tau}_1 + \alpha_2 \bm{\tau}_2 + \sum_{s \notin \{1, 2\}} \alpha_s\bm{\tau}_s$, and each task-specific model, evaluated on task $1$ and $2$. 
  To visualize the landscape of the merged multi-task model on a 2D heatmap, we adjust only two task coefficients corresponding to the evaluation tasks.
  The models and evaluation task pairs used for the visualization are the same as those used in Figure~\ref{fig:wd}.
  The meaning of the red box is the same as in Figure \ref{fig:wd}.
  }
  \label{fig:wd_merging}
  \vskip -0.15in
\end{figure}

In this section, we experimentally validate our argument by showing that our proposal, SAFT, leads to better weight disentanglement (Figure~\ref{fig:wd} and Figure~\ref{fig:wd_merging}), better cross-task linearity (Figure~\ref{fig:ctl}), and better joint-task loss linearity (Figure~\ref{fig:jtll} and Figure~\ref{fig:jtll_merging}), which are the signs of less parameter interference. 
Better performance by our proposed method, compared to standard SGD and other fine-tuning schemes specifically designed for model merging, further underlines the effectiveness of SAFT in reducing parameter interference.
Then, we also theoretically show that the capability of SAFT to reduce the dominant Hessian eigenvalues induces joint-task loss linearity (the linearity of loss on all joint tasks).

\paragraph{Weight disentanglement.} 
\cite{ortiz2023task} argue that for model merging via task arithmetic to be effective, weight disentanglement (a task vector for one task not affecting the outputs of task-specific model on other tasks) is a necessary condition.
In this work, we show that SAFT indeed achieves better weight disentanglement, in comparison to a standard objective function.
The weight disentanglement is achieved when the parameter updates with the task vector for the task $t$ (i.e., $\bm{\tau}_t$) impart a localized influence on the output of a model only when processing an input from the task $t$, without impacting the output of the model when inputs from other tasks are processed.
\cite{ortiz2023task} formally express the localized influence of task vectors on the input space as
\begin{align}
f\left(\vx; \vtheta_{\text{merge}} \right) &= f \left(\vx; \vtheta_0 + \sum_{s=1}^T \alpha_s \bm{\tau}_s \right)\nonumber\\
&= f\left(\vx; \vtheta_0 + \alpha_t \bm{\tau}_t \right) \quad \text{when} \quad \vx\in X^{(t)}.
\label{eq:disentangle}
\end{align}

To evaluate how well weight disentanglement is satisfied,~\cite{ortiz2023task} quantify disentanglement error as the discrepancy between the output of a merged model and $t$-th task-specific model on input data of $t$-th task.
Lower disentanglement errors imply that each task contributes appropriately without adversely affecting others. 
Following the experimental settings by~\cite{ortiz2023task}, we first evaluate weight disentanglement during the merging of two tasks:
\begin{align}
\xi(\alpha_1, \alpha_2) &= \sum_{t=1}^{2} \mathbb{E}_{\vx\in X^{(t)}} \left[ \text{dist}\left( f(\vx; \vtheta_0 + \alpha_t \bm{\tau}_t ), f(\vx; \vtheta_0 + \alpha_1 \bm{\tau}_1 + \alpha_2 \bm{\tau}_2) \right) \right],
\label{eq:de_relation}
\end{align}
where $\xi(\alpha_1, \alpha_2)$ is the disentanglement error with respect to two given tasks and visualized in Figure~\ref{fig:wd}.
We further stress-test and evaluate the disentanglement error while considering merging of all task-specific models ($T=8$ in this work), thereby evaluating how well an actual merged model achieves weight disentanglement.
However, it is difficult to visualize if all $T$ task coefficients are adjusted.
In this work, for ease of visualization, we adjust task-coefficients of two tasks while fixing other task coefficients, while still considering a merged model with all task vectors:
\begin{align}
\xi_\text{all}&(\alpha_1,\alpha_2) = \nonumber\\
& \sum_{t=1}^{2} \mathbb{E}_{\vx\in X^{(t)}} \left[ \text{dist}\left( f(\vx; \vtheta_0 + \alpha_t \bm{\tau}_t ), f(\vx; \vtheta_0 + \alpha_1 \bm{\tau}_1 + \alpha_2 \bm{\tau}_2 + \sum_{s\notin\{1,2\}}\alpha_s\bm{\tau}_s ) \right) \right],
\label{eq:de_relation_all}
\end{align}
where \( \xi_\text{all}(\alpha_1,\alpha_2) \) represents the total disentanglement error across all tasks (visualized in Figure~\ref{fig:wd_merging}, and \( \text{dist}(\cdot,\cdot) \) is a distance metric measuring the divergence between the outputs of the task-specific model and the merged model. 
Small \( \xi(\alpha_1, \alpha_2) \) or \( \xi_\text{all}(\alpha_1, \alpha_2) \) implies that the merged model parameter \( \vtheta_{\text{merge}} \) better reflects the individual contribution of each task, signifying reduced parameter interference.
Indeed, the visualizations of weight disentanglement when considering two tasks in Figure~\ref{fig:wd} and all tasks ($T=8$) in Figure~\ref{fig:wd_merging} demonstrate the effectiveness of our method in achieving better weight disentanglement.
In particular, we note that the weight disentanglement error of the model merging with standard fine-tuning optimization increases significantly when considering all tasks in model merging, compared to considering two tasks.
On the other hand, our proposal, SAFT, reduces the weight disentanglement even when considering all tasks, further highlighting the effectiveness of our method in model merging.

\begin{figure}[t]
  \centering
  \includegraphics[width=\linewidth]{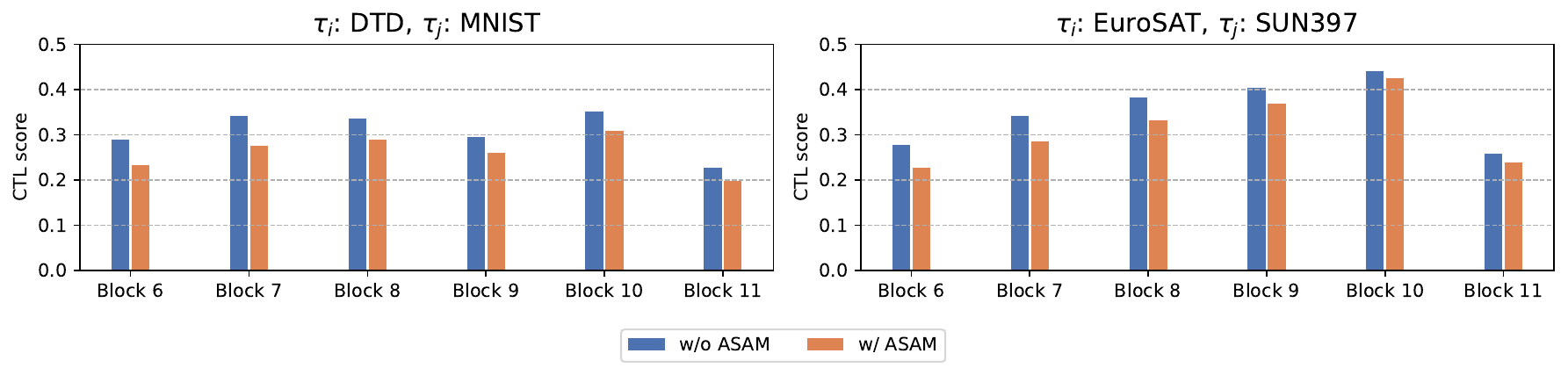}
  \vspace{-1.5em}
  \caption{
  {\textbf{Verification of CTL between the merged model and task-specific models}. 
  We compare $\mathbb{E}_{\mathcal{D}^{(s)}\cup\mathcal{D}^{(t)}}[1 - \cos^{(\ell)}(\vx; 2\lambda\bm{\tau}_s, 2\lambda\bm{\tau}_t)]$ between sharpness-aware fine-tuning and SGD.
  The values for the last six blocks are evaluated on the two task pairs DTD-MNIST and EuroSAT-SUN397. 
  We set the scaling factor $\lambda$ to $0.3$.
  }}
  \label{fig:ctl}
  \vskip -0.15in
\end{figure}
\paragraph{Cross-task linearity.} 
Cross-Task Linearity (CTL)~\citep{zhou2024on} is a property that ensures the linear separability of task influences on the layer outputs across all layers of the network. 
To satisfy CTL, for every layer \(\ell\), the layer output of a merged model should be approximately equal to the combination of the layer outputs of individual task-specific models scaled by their respective coefficients. 
Formally,~\cite{zhou2024on} define CTL condition as:
\begin{align}
\label{eq:ctl_definition}
f^{(\ell)}\left(\vx; \lambda \vtheta_s + (1 - \lambda) \vtheta_t \right) &\approx \lambda f^{(\ell)}(\vx;\vtheta_s) + (1 - \lambda) f^{(\ell)}(\vx;\vtheta_t),
\end{align}
where $\lambda\in\mathbb{R}$ is a scaling factor; $\vx\in X^{(s)}\cup X^{(t)}$ is an input from either task $s$ or $t$; $\vtheta_s, \vtheta_t$ are parameters of task-specific models fine-tuned on task $s$ and $t$ respectively; and $f^{(\ell)}\left(\vx;\vtheta\right)$ represents the response (or a feature) of $\ell$-th layer of a network $f$ for the given input $\vx$.
The cross-task linearity at each layer implies that the influence of one task on another is minimal, thereby facilitating effective weight disentanglement. 
\cite{zhou2024on} demonstrate that satisfying CTL condition leads to reducing the disentanglement error \( \xi(\alpha) \).
To evaluate whether CTL is satisfied, the following cosine similarity metric is used:
\begin{align}
  \begin{split}
    &\cos^{(\ell)}(\vx; 2\lambda\bm{\tau}_s, 2\lambda\bm{\tau}_t) \\
    =& \cos\left[ f^{(\ell)}(\vx;\vtheta_0 + \lambda(\bm{\tau}_s + \bm{\tau}_t)), \frac{1}{2} f^{(\ell)}(\vx; \vtheta_0 + 2\lambda\bm{\tau}_s) + \frac{1}{2} f^{(\ell)}(\vx;\vtheta_0 + 2\lambda\bm{\tau}_t) \right].
  \end{split}
\end{align}
\label{eq:cosine_arith}

The metric measures the cosine similarity between the layer output of a merged model and the averaged layer outputs of the task-specific models. 
Following the settings in~\citep{zhou2024on}, we use the metric $\mathbb{E}_{\mathcal{D}}[1 - \cos^{(\ell)}(\vx; 2\lambda\bm{\tau}_s, 2\lambda\bm{\tau}_t)]$ to evaluate how well CTL is satisfied, where smaller values of $\mathbb{E}_{\mathcal{D}}[1 - \cos^{(\ell)}(\vx; 2\lambda\bm{\tau}_s, 2\lambda\bm{\tau}_t)]$ indicate stronger CTL.
Since satisfying CTL leads to better weight disentanglement, smaller values of $\mathbb{E}_{\mathcal{D}}[1 - \cos^{(\ell)}(\vx; 2\lambda\bm{\tau}_s, 2\lambda\bm{\tau}_t)]$ should result in lower disentanglement error \( \xi(\alpha_1,\alpha_2) \), as noted by~\cite{zhou2024on}.
Figure~\ref{fig:ctl} shows that our method has lower $\mathbb{E}_{\mathcal{D}}[1 - \cos^{(\ell)}(\vx; 2\lambda\bm{\tau}_s, 2\lambda\bm{\tau}_t)]$ in comparison to SGD, demonstrating that our method results in not just better weight disentanglement, but also better cross-task linearity.

\noindent\textbf{Joint-task loss landscape.}
We empirically demonstrate that our method reduces parameter interference by finding flatter minima across the joint tasks. 
Figure~\ref{fig:jtll} shows the joint-task loss landscape visualizations for two task-specific models trained on corresponding two tasks.
We observe that our method allows models to reach flatter minima across the joint tasks compared to SGD, particularly around the boundaries of the task coefficients search space in task arithmetic.
Our method increases the likelihood of finding a merged model connected to each task-specific model along a low-loss path, which indicates a smaller performance gap between the merged model and the task-specific models.
Consequently, our proposal makes it easier to find a merged model with reduced parameter interference, compared to SGD.

Yet, the tendencies observed with two tasks may not generalize in the case of more tasks, as the chance and extent of interference increases with the number of tasks~\citep{yadav2023ties}.
Therefore, we also visualize the loss landscape of the multi-task model built by merging all eight tasks, parameterized by $\vtheta_{\text{merge}} = \vtheta_0 + \sum_{t=1}^8 \alpha_t \bm{\tau}_t$.
To visualize the loss landscape of an eight-task-merged model on a 2D heatmap, we vary only the coefficients of two randomly chosen tasks.

Figure~\ref{fig:jtll_merging} shows the loss landscape of a multi-task model obtained by merging the parameters of eight task-specific models. 
Compared to Figure~\ref{fig:jtll}, the minima in the landscape shrink in every case as the number of task-specific models to be merged increases.
However, while the minima found by SGD shrink significantly, the minima found by our method remain to be flat and wide in all cases.
This suggests that our method maintains the ability to reduce parameter interference and preserve the performance of the merged model, even when more task-specific models are merged.

\begin{figure}[t]
  \centering
  \includegraphics[width=\linewidth]{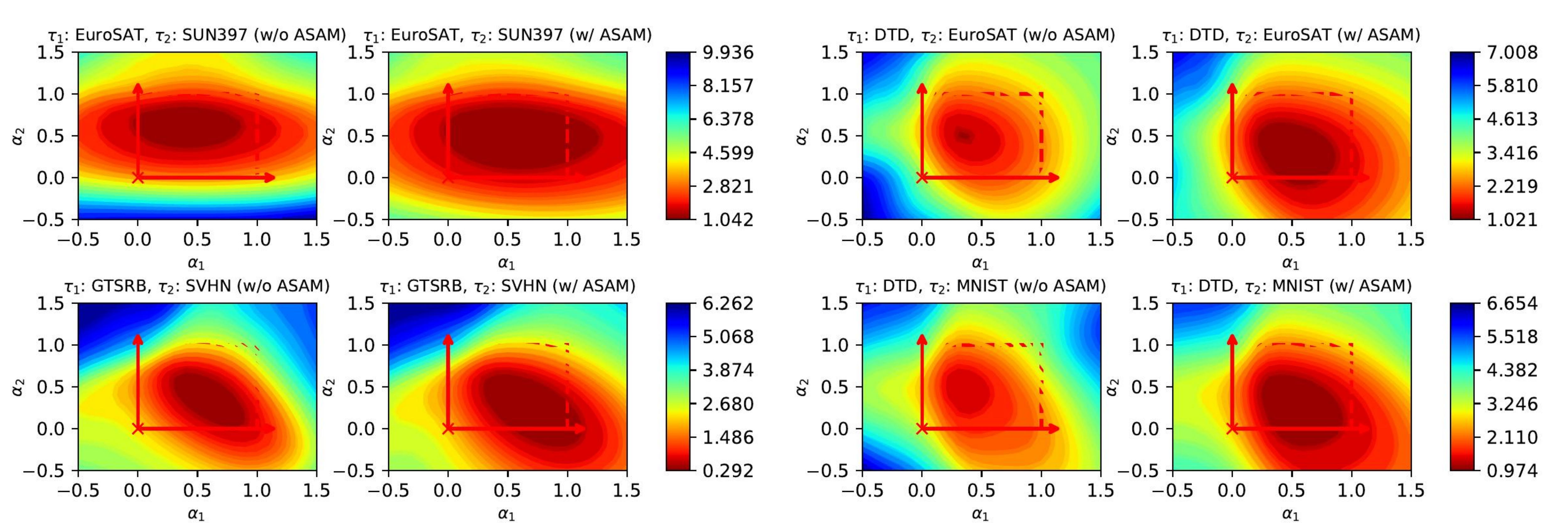}
  \caption{
  \textbf{Joint-task loss landscape visualization of two task-specific models across two tasks}.
  Each pixel in the heatmap corresponds to the loss values $\gL(\vtheta_{\text{merge}};\gD^{(1)}) + \gL(\vtheta_{\text{merge}};\gD^{(2)})$ of the two-task-merged model, parameterized by $\vtheta_{\text{merge}} = \vtheta_0 + \alpha_1 \bm{\tau}_1 + \alpha_2 \bm{\tau}_2$, evaluated on task $1$ and task $2$.
  The setting of the model, task pairs, and red box is the same as in Figure~\ref{fig:wd}.
  We use CLIP ViT-B/32 on the EuroSAT-SUN397 and DTD-MNIST task pairs to plot these visualizations.
  The red box highlights the search space used to find the optimal task coefficients $\{ \alpha_1, \alpha_2 \}$ of task arithmetic.
  }
  \label{fig:jtll}
\end{figure}
\begin{figure}[t]
  \centering
  \includegraphics[width=\linewidth]{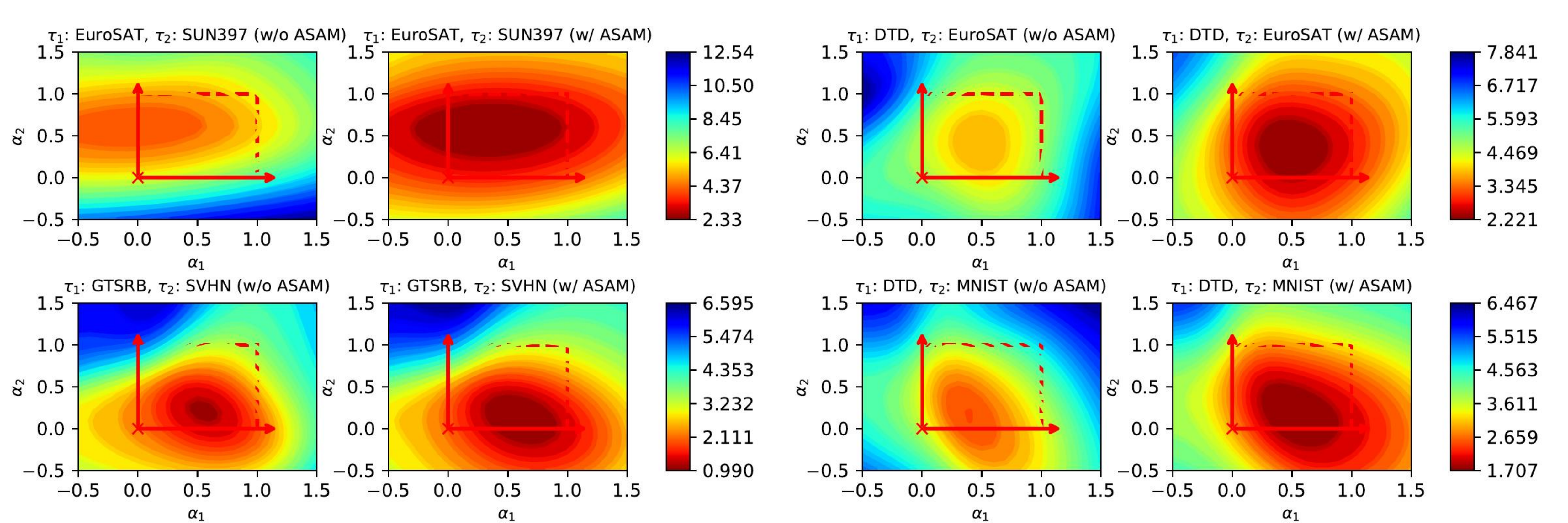}
  \caption{
  \textbf{Joint-task loss landscape visualization of eight-task-merged models across two tasks}.
  Each pixel in the heatmap corresponds to the loss values $\gL(\vtheta_{\text{merge}};\gD^{(1)}) + \gL(\vtheta_{\text{merge}};\gD^{(2)})$ of the eight-task-merged model, parameterized by $\vtheta_{\text{merge}} = \vtheta_0 + \sum_{t=1}^8 \alpha_t\bm{\tau}_t$, evaluated on tasks $1$ and $2$.
  We adjust only the two task coefficients corresponding to the evaluation tasks to visualize the weight disentanglement on a 2D map, as in Figure~\ref{fig:wd_merging}. 
  The setting of the model, task pairs, and red box are the same as in Figure~\ref{fig:jtll}.
  We use the same models and task pairs as illustrated in Figure~\ref{fig:jtll}.
  }
  \label{fig:jtll_merging}
\end{figure}

\noindent\textbf{Theoretical results.}
Here, we further validate our proposal by theoretically demonstrating that SAFT leads to the loss of a merged model and task-specific models being connected along a linear path on the loss landscape over all tasks.
Formally, we define the loss over all tasks as joint-task loss and models being connected along a linear path on the joint-task loss landscape as joint-task loss linearity as follows:

\begin{definition}[Joint-task loss]
\label{def:joint_task_loss}
Given a joint-task dataset $\gD=\gD_s \cup \gD_t$ and a model with parameters $\vtheta$, we define Joint-Task Loss, denoted as \( \gL_{JTL}(\vtheta ; \gD) \), as:
\begin{align}
\gL_{JTL}(\vtheta ; \gD) &= \gL(\vtheta; \gD_s) + \gL(\vtheta; \gD_t),
\label{eq:joint_task_loss}
\end{align}
where \( \gL(\vtheta ; \gD_s) \) and \( \gL(\vtheta ; \gD_t) \) denote the losses incurred by a model, parameterized by $\vtheta$ on datasets \( \gD_s \) and \( \gD_t \), respectively.
\end{definition}

\begin{definition}[Joint-task loss linearity]
\label{def:joint-task_loss_linearity}
\textbf{Joint-task loss linearity (JTL linearity)} describes the linear relationship between the joint-task loss of an interpolated model and the weighted sum of the individual joint-task losses of task-specific models. 
Specifically, given models, parameterized by $\vtheta_s$ and $\vtheta_t$ fine-tuned on respective tasks with datasets $\gD_s$ and $\gD_t$, we say JTL linearity holds over a joint-task dataset $\gD=\gD_s\cup\gD_t$ if
\begin{align}
\gL_{JTL}(\alpha\vtheta_s + (1-\alpha)\vtheta_t; \gD) \approx \alpha \gL_{JTL}(\vtheta_s; \gD) + (1-\alpha) \gL_{JTL}(\vtheta_t; \gD),
\label{eq:jtl_linearity}
\end{align}
where \( \alpha \in [0, 1] \) is a scalar.
\end{definition}

\begin{theorem}[SAFT induces joint-task loss linearity]
\label{theorem:sam_improve_JTL_linearity}
If models, parameterized by \(\vtheta_s\) and \(\vtheta_t\), are obtained by fine-tuning from a common pre-trained model via SAFT on their respective datasets, the models better satisfy the joint-task loss linearity.
A proof is relegated to Appendix~\ref{apdix:d} due to space constraints.
\end{theorem}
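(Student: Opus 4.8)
The plan is to recast the JTL linearity gap as a second-order (curvature) quantity and then show that SAFT drives this curvature down through its control of the dominant Hessian eigenvalue. Fix the interpolation direction $\vd=\vtheta_s-\vtheta_t$ and define the one-dimensional restriction $h(\alpha)=\gL_{JTL}(\vtheta_t+\alpha\vd;\,\gD)$ for $\alpha\in[0,1]$, so that the right-hand side of Equation~\ref{eq:jtl_linearity} is exactly the chord $\alpha h(1)+(1-\alpha)h(0)$ of $h$. The linearity gap is then the deviation of $h$ from its own chord, and by the standard interpolation-error identity this equals $-\tfrac{1}{2}h''(\xi)\,\alpha(1-\alpha)$ for some $\xi\in(0,1)$. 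Differentiating twice gives $h''(\alpha)=\vd^\top\nabla^2\gL_{JTL}(\vtheta_t+\alpha\vd)\,\vd$, a Hessian quadratic form, so the gap is bounded by $\tfrac{1}{8}\,\lVert\vd\rVert^2\,\sup_{\alpha}\lambda_{\max}\!\big(\nabla^2\gL_{JTL}(\vtheta_t+\alpha\vd)\big)$. This reduces the claim to controlling the dominant eigenvalue of the joint-task Hessian along the segment connecting the two fine-tuned models.

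Next I would connect this dominant eigenvalue to the SAFT objective. Decomposing $\nabla^2\gL_{JTL}=\nabla^2\gL(\cdot;\gD_s)+\nabla^2\gL(\cdot;\gD_t)$ and expanding the SAM inner maximization (Equation~\ref{eq:sam_original}) to second order at a critical point $\vtheta$, where $\nabla_\vtheta\gL=0$, gives $\max_{\lVert\bm{\epsilon}\rVert\le\rho}\gL(\vtheta+\bm{\epsilon};\gD)-\gL(\vtheta;\gD)\approx\tfrac{\rho^2}{2}\lambda_{\max}(\nabla^2\gL(\vtheta;\gD))$. Hence minimizing the SAFT sharpness term is, to leading order, minimizing $\lambda_{\max}$ of each task's loss Hessian at its own minimum; SAFT therefore returns $\vtheta_s$ and $\vtheta_t$ at which the respective task Hessians have smaller top eigenvalues than those produced by plain SGD. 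Substituting these smaller eigenvalues into the bound above yields a smaller linearity gap, which is the assertion of the theorem.

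The main obstacle is that SAFT only flattens the task-$s$ Hessian at $\vtheta_s$ and the task-$t$ Hessian at $\vtheta_t$, whereas the bound requires the \emph{joint} Hessian to stay flat along the \emph{entire} segment — including the uncontrolled cross terms such as $\nabla^2\gL(\cdot;\gD_t)$ evaluated near $\vtheta_s$. To close this, I would assume a Lipschitz (bounded-variation) Hessian together with the empirical fact that the task vectors $\bm{\tau}_s,\bm{\tau}_t$ are small relative to $\vtheta_0$, so that $\lVert\vd\rVert$ is small and the segment never leaves the flat neighborhood that SAFT carves out around each minimum; a first-order expansion of the Hessian then propagates the endpoint curvature bounds across the path. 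Formalizing this propagation — and arguing that the flatness induced at each minimum extends to a neighborhood large enough to contain the interpolation path — is the technically delicate step, while the remaining interpolation-error and Taylor bookkeeping is routine.
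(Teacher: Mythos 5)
Your proposal is correct in substance and arrives at the same final bound as the paper, but by a genuinely different route. You restrict the joint-task loss to the segment, apply the chord (mean-value) interpolation identity, and bound the gap by $\tfrac{1}{8}\lVert\vd\rVert^2\sup_\alpha \lambda_{\max}\bigl(\nabla^2\gL_{JTL}(\vtheta_t+\alpha\vd;\gD)\bigr)$; this makes the leading term depend on the \emph{joint} Hessian at an uncontrolled intermediate point, which forces you to add Lipschitz-Hessian and small-$\lVert\vd\rVert$ assumptions to propagate the endpoint flatness along the path --- exactly the step you flag as delicate. The paper sidesteps this obstacle with a different decomposition: it splits the gap as $\delta=\delta_s+\delta_t$, where $\delta_s$ is the linearity defect of $\gL(\cdot;\gD_s)$ and $\delta_t$ that of $\gL(\cdot;\gD_t)$, and then Taylor-expands the task-$s$ piece around $\vtheta_s$ and the task-$t$ piece around $\vtheta_t$. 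The first-order terms cancel identically (so no critical-point assumption is needed anywhere), and the leading term becomes $-\tfrac{1}{2}\alpha(1-\alpha)(\vtheta_t-\vtheta_s)^\top(\mathbf{H}_s+\mathbf{H}_t)(\vtheta_t-\vtheta_s)$ with $\mathbf{H}_s=\nabla^2_\vtheta\gL(\vtheta_s;\gD_s)$ and $\mathbf{H}_t=\nabla^2_\vtheta\gL(\vtheta_t;\gD_t)$ --- precisely the two Hessians that SAFT controls, each at its own minimizer --- while every cross-location effect is relegated to third-order remainders $\epsilon$. In fairness, that remainder hides the same cubic-order slack your Lipschitz propagation makes explicit, so neither argument is strictly more rigorous; yours is more honest about the regularity needed, the paper's is cleaner because the SAFT-controlled curvature appears directly in the leading term. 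The other difference is the link from SAFT to small $\lambda_{\max}$: you derive it heuristically by expanding the SAM inner maximization at a critical point, whereas the paper simply cites \cite{agarwala2023sam} for the fact that SAM suppresses the dominant Hessian eigenvalue along the training trajectory, which avoids assuming the fine-tuned models are exact stationary points. If you carry out your plan, state the two extra hypotheses (operator-norm Lipschitz Hessians on a neighborhood containing the segment, and $\lVert\vtheta_s-\vtheta_t\rVert$ small) as explicit assumptions; alternatively, adopting the paper's two-point expansion would let you drop the propagation lemma entirely.
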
 

Joint-task loss linearity induced by SAFT implies that SAFT leads to a merged model with less performance degradation in comparison to task-specific models on the joint task, thus implying that models fine-tuned by our method experience less parameter interference during the process of model merging.

\section{Experiments}
\label{sec:experiments}
In this section, following the settings from~\cite{ortiz2023task}, we conduct experiments on diverse tasks to demonstrate the effectiveness of our proposal, sharpness-aware fine-tuning (SAFT), in improving the overall performance of a merged model.
We compare against three fine-tuning approaches: SGD, linearized fine-tuning in the tangent space (FTTS)~\citep{ortiz2023task}, and fine-tuning linear layers only (FTLO)~\citep{jin2024fine}.
We also validate the effectiveness, applicability, and generalizability of SAFT by assessing its performance in combination with three different model merging methods: weight averaging, task arithmetic~\citep{ilharco2023editing}, and TIES~\citep{yadav2023ties}.

\subsection{Training setup} 
Following the same training protocol outlined in~\cite{ilharco2022patching}, we fine-tune two CLIP~\citep{radford2021llearning} models: (a) ViT-B/32 and (b) ViT-B/16.
Our experiments are conducted across eight diverse datasets: (1) Cars~\citep{krause20133d}, (2) DTD~\citep{cimpoi2014describing}, (3) EuroSAT~\citep{helber2019eurosat}, (4) GTSRB~\citep{stallkamp2011german}, (5) MNIST~\citep{deng2012mnist}, (6) RESISC45~\citep{cheng2017remote}, (7) SUN397~\citep{xiao2016sun}, (8) SVHN~\citep{netzer2011reading}.
All fine-tuning processes begin from the same CLIP pre-trained checkpoint obtained from the $\texttt{open\_clip}$~\citep{radford2021llearning} repository.
We fine-tune each model for 8000 iterations with a batch size of 128 and a learning rate of \(10^{-5}\) for all backbones and all fine-tuning methods. 
The learning rate schedule follows a cosine annealing approach with 500 warm-up steps, and optimization is performed using the AdamW~\citep{loshchilov2019decoupled}. 
Consistent with~\cite{ilharco2022patching}, we freeze the weights of the classification layer derived from encoding a standard set of zero-shot template prompts for each dataset. 
This strategy ensures that no additional learnable parameters are introduced during fine-tuning and does not compromise model accuracy. 
For more experimental details, please refer to Appendix \ref{apdix:a}.

\subsection{Main results}\label{sec:main_results}
We evaluate the effectiveness of SAFT in closing the performance gap between a merged model and each task-specific models, in comparison to other three fine-tuning approaches.
Table~\ref{tab:ft_method} shows that SAFT achieves the higher absolute and normalized accuracies in every case, compared to other fine-tuning methods. 
Normalized accuracy is defined as the absolute accuracy divided by the corresponding accuracy of the fine-tuned task-specific model, evaluating the performance gap between a merged model and task-specific models.
These results suggest that SAFT not only improves performance in downstream tasks but also narrows the performance gap between the merged model and fine-tuned models, improving the overall performance.
Moreover, SAFT can be used together with other fine-tuning methods (FTTS~\citep{ortiz2023task} and FTLO~\citep{jin2024fine}), enhancing the performance in multi-task settings during model merging, demonstrating its generalizability and applicability.
In particular, FTTS~\citep{ortiz2023task} and FTLO~\citep{jin2024fine} demonstrate better multi-task performance compared to SGD, as these fine-tuning methods reduce interference between tasks by encouraging weight disentanglement~\citep{malladi2023kernel, ortiz2023task}.
Thus, the performance improvement brought by SAFT on top of these fine-tuning methods demonstrates the orthogonality of our proposal.

In Table~\ref{tab:merging_method}, our method is shown to bring consistent performance improvement across diverse model merging methods and image encoder model backbones.
Notably, our method brings performance improvement when used together with both weight averaging and task arithmetic.
Task arithmetic searches for the optimal task coefficient within a given search space, while weight averaging is a specific case of task arithmetic, where \( \alpha_t = \frac{1}{T}\).
This suggests that our method finds flatter minima that covers the task coefficients search space in task arithmetic compared to SGD.
Moreover, SAFT also performs better in the case of TIES merging, indicating that interference mitigation by our method complements the parameter interference mitigation achieved by TIES merging.
\begin{table*}
\caption{\textbf{Multi-task performance across different fine-tuning methods}. We report the average absolute and normalized accuracies for three fine-tuning baselines: SGD, FTTS, and FTLO. 
Results are shown for three fine-tuning methods, grouped by whether SAFT-ASAM is applied. 
ViT-B/32 is used as the image encoder of CLIP, with task arithmetic as the model merging method in every case across eight tasks.
}
\setlength\tabcolsep{5.5pt}
\renewcommand{\arraystretch}{1.2}
\footnotesize
\begin{center}
\begin{tabular}{l@{\hskip .125in}|cc|cc|cc}
\toprule
\multirow{2}{*}{Fine-tuning method ($\rightarrow$)} & \multicolumn{2}{c|}{SGD} & \multicolumn{2}{c|}{FTTS} & \multicolumn{2}{c}{FTLO}\\
& Abs. & Norm. & Abs. & Norm. & Abs. & Norm. \\
\midrule
w/o SAFT-ASAM & 68.23 & 75.47 & 78.35 & 86.83 & 75.93 & 85.74 \\
w/ SAFT-ASAM (Ours) & \textbf{69.45} & \textbf{76.32} & \textbf{79.38} & \textbf{87.72} & \textbf{77.49} & \textbf{88.77} \\
\bottomrule
\end{tabular}
\end{center}
\label{tab:ft_method}
\end{table*}

\begin{table*}
\caption{\textbf{Multi-task performance across different model merging methods and image encoder models}. 
We report the average absolute and normalized accuracies for three model merging methods: weight averaging, task arithmetic, and TIES merging.
We also compare the performance of two different models used as the image encoder of CLIP: ViT-B/32 and ViT-B/16.
All cases are fine-tuned using SGD and evaluated across eight tasks.
}
\setlength\tabcolsep{5.5pt}
\renewcommand{\arraystretch}{1.2}
\footnotesize
\begin{center}
\begin{tabular}{l@{\hskip .125in}|cc|cc|cc}
\toprule
\multirow{2}{*}{Merging method ($\rightarrow$)} & \multicolumn{2}{c|}{Weight averaging} & \multicolumn{2}{c|}{Task arithmetic} & \multicolumn{2}{c}{TIES merging} \\
 & Abs. & Norm. & Abs. & Norm. & Abs. & Norm. \\
\midrule
& \multicolumn{6}{c}{ViT-B/32} \\
\cmidrule{2-7}
SGD & 65.72 & 72.91 & 68.23 & 75.47 & 74.57 & 82.29 \\
SAFT-ASAM (Ours) & \textbf{66.76} & \textbf{73.62} & \textbf{69.45} & \textbf{76.32} & \textbf{75.45} & \textbf{82.86} \\
\midrule
& \multicolumn{6}{c}{ViT-B/16} \\
\cmidrule{2-7}
SGD & 71.58 & 77.37 & 73.40 & 79.31 & 77.94 & 84.04 \\
SAFT-ASAM (Ours) & \textbf{71.84} & \textbf{77.53} & \textbf{76.77} & \textbf{82.50} & \textbf{80.14} & \textbf{86.23} \\
\bottomrule
\end{tabular}
\end{center}
\label{tab:merging_method}
\end{table*}

\section{Conclusion}
In this work, we draw connections between two research fields of machine learning: sharpness-aware minimization and multi-task model merging.
Particularly, the connections are drawn from the formulation of two objectives of model merging: (1) reducing parameter interference between task-specific models and (2) achieving better generalization of each task-specific model.
Building upon the objectives of model merging, we derive a new objective function for fine-tuning, from which we find similarities with sharpness-aware minimization. 
Upon observation, we propose to approximate our newly derived fine-tuning objective function with sharpness-aware minimization: sharpness-aware fine-tuning (SAFT).
Experimental and theoretical results demonstrate that SAFT indeed results in less performance interference and better performance of a merged model, even when applied together with other merging and fine-tuning methods designed for model merging.
Motivated by the effectiveness and applicability of our proposal, we hope that this work encourages further research on investigating the relationship between SAFT and model merging, opening a new research avenue.

\newpage
\section*{Acknowledgments}
This research was supported by Basic Science Research Program through the National Research Foundation of Korea(NRF) funded by the Ministry of Education (No.RS-2024-00393305) and Institute of Information \& communications Technology Planning \& Evaluation (IITP) grant funded by the Korea government (MSIT) (No.RS-2020-II201373, Artificial Intelligence Graduate School Program (Hanyang University)).

\bibliographystyle{iclr2025_conference}
\bibliography{references}

\newpage
\appendix
\setcounter{figure}{0}
\renewcommand{\thefigure}{A\arabic{figure}}
\setcounter{table}{0}
\renewcommand{\thetable}{A\arabic{table}}
\section{Experimental details}
\label{apdix:a}
\subsection{Fine-tuning baselines}
We compare the following three fine-tuning baselines with and without SAFT: 

(1) \textbf{SGD}: This refers to standard fine-tuning that uses only an optimizer such as AdamW~\citep{loshchilov2019decoupled}.

(2) \textbf{F}ine-\textbf{T}uning in the \textbf{T}angent \textbf{S}pace (\textbf{FTTS}) ~\citep{ortiz2023task}: This fine-tunes the model in the tangent space at its pre-trained initialization.
It achieves this by linearizing the model using a first-order Taylor expansion \( f_\text{lin}(\vtheta; \gD) = f(\vtheta_0; \gD) + (\vtheta - \vtheta_0)^\top\nabla f(\vtheta_0; \gD) \), where \( \vtheta_0 \) represents the parameters of the pre-trained model and \( \gD \) is the training dataset.
The method freezes \( \vtheta_0 \) and updates only \( \vtheta \).

(3) \textbf{F}ine-\textbf{T}uning \textbf{L}inear Layers \textbf{O}nly (\textbf{FTLO})~\citep{jin2024fine}: This exclusively fine-tunes the linear layers within the attention module.
Therefore, this method can only be applied to model architectures that include attention modules such as Transformer~\citep{vaswani2017attention}.

We utilize ASAM~\citep{kwon2021asam} as a default SAFT method in every experiments, since it finds minima adaptively by considering correlation between generalization gap and sharpness.
We set the \( \rho \) value of ASAM to $0.5$, following the default setup outlined in ASAM, along with all other ASAM hyperparameters.

\subsection{Merging methods}
We merge the models that achieve the best performance for each corresponding task.
These best models are selected based on their performance on a validation set split, which is split from the training set at a $0.1$ ratio, as specified in \cite{ilharco2023editing}.

We use the following model merging methods as baselines: 

(1) \textbf{Weight averaging}: This merges fine-tuned models by averaging their parameters element-wise, denoted as \( \vtheta_\text{merge} = \frac{1}{T} \sum_{t=1}^T \vtheta_t \), where \( \vtheta_t \) represents the fine-tuned parameters for each corresponding downstream task, \( T \) is the number of downstream tasks being merged.

(2) \textbf{Task arithmetic}~\citep{ilharco2023editing}: This method calculates task vectors \( \bm{\tau}_t = \vtheta_t - \vtheta_0 \) for each downstream task \( t \), where \(\vtheta_t\) represents the fine-tuned parameters for task \( t \) and \( \vtheta_0 \) represents the pre-trained parameters.
A linear combination of these task vectors is then added to the pre-trained parameters, denoted as \( \vtheta_\text{merge} = \vtheta_0 + \sum_{t=1}^T \alpha_t \bm{\tau}_t\), where \( \alpha_t \) is a task coefficient that scales the corresponding task vector.
This method generalizes the weight averaging when \( \alpha_t = \frac{1}{T} \) for \( t = 1, 2, \ldots, T\).

Since the search space for \( \alpha_t \) becomes too large as the number of tasks increases, we set the task coefficients to be the same for all tasks and search for the optimal coefficient within the range \( [0.1, 0.3, 0.5, 0.7, 0.9, 1.0]\) using the validation set of each task.

(3) \textbf{TIES merging}~\citep{yadav2023ties}: This method mitigates parameter interference before merging models.
First, it trims parameters changed that change minimally during fine-tuning, as these small changes in each model can become more pronounced after element-wise parameter merging.
Second, it resolves parameter interference due to sign conflicts by determining the sign of each parameter through a majority election before merging the models.

We apply TIES merging to task arithmetic.
To find the optimal merged model, we search for the task coefficients in task arithmetic within the range \( [0.1, 0.3, 0.5, 0.7, 0.9, 1.0]\) and the percentile of task vectors to be pruned to zero within \( [0.7, 0.8, 0.9] \), using the validation set for each task.

\subsection{Visualization setup}
We produce the joint-task loss landscape and disentanglement error under two distinct settings: (1) merging two task-specific models across two tasks, and (2) merging all eight task-specific models across two tasks.

\begin{enumerate}
    \item Two task-specific models across two tasks: 
    In the first setting, we parameterize the merged model as:
    \[
    \vtheta_{\text{merge}} = \vtheta_0 + \alpha_1 \bm{\tau}_1 + \alpha_2 \bm{\tau}_2,
    \]
    where \(\bm{\tau}_1\) and \(\bm{\tau}_2\) represent the parameters of models fine-tuned on tasks \(1\) and \(2\), respectively.
    
    \item The eight-task-merged model across two Tasks: 
    In the second setting, we parameterize the merged model as:
    \[
    \vtheta_{\text{merge}} = \vtheta_0  + \alpha_1 \bm{\tau}_1 + \alpha_2 \bm{\tau}_2 + \sum_{s \notin \{1,2\}} \alpha_s \bm{\tau}_s,
    \]
    with \(\alpha = 0.3\), where \(\bm{\tau}_s\) denotes the parameters of the additional six tasks.
\end{enumerate}

For both settings, we use \((\alpha_1, \alpha_2)\) pairs spanning from \(-0.5\) to \(1.5\) with $21$ evenly spaced points along each axis, resulting in a $21 \times 21$ grid.
For each \((\alpha_1, \alpha_2)\) task coefficient pair on the grid, we compute the disentanglement error \(\xi(\alpha_1, \alpha_2)\) and visualize the error values using contour plots to identify regions where the weight disentanglement is stronger.
Since task coefficients are real numbers, we utilize contour plots to effectively visualize the variations in loss landscape and disentanglement error across the continuous \((\alpha_1, \alpha_2)\) parameter space. 

\setcounter{figure}{0}
\renewcommand{\thefigure}{B\arabic{figure}}
\setcounter{table}{0}
\renewcommand{\thetable}{B\arabic{table}}
\section{Derivation of equation~\ref{eq:proposal}}
\label{app_sec:deriv}
We start with simplifying Equation~\ref{eq:motivation}, which is the objective function that incorporates the goals of model merging:
\begin{align*}
        \vtheta_t &= \argmin_{\vtheta}\gL(\vtheta_\text{merge}(\vtheta);\,\gD^{(t)}) - \gL(\vtheta;\,\gD^{(t)}) + \gL(\vtheta;\,\gD^{(t)})\\
        &= \argmin_{\vtheta}\gL(\vtheta_\text{merge}(\vtheta);\,\gD^{(t)}).
\end{align*}
Here, we consider task coefficients $\{\alpha_s\}$ and other task vectors $\{\bm{\tau}_s\}_{s\neq t}$ to be fixed. 
Since then, instead of $\vtheta_\text{merge} = \vtheta_0 + \sum_{s=1}^T\alpha_s\bm{\tau}_s$ in Equation~\ref{eq:task_arithmetic}, we express $\vtheta_\text{merge}(\vtheta)$ as $\vtheta_0 + \sum_{s\neq t}\alpha_s\bm{\tau}_s + \alpha_t\bm{\tau}$, where $\bm{\tau} = \vtheta - \vtheta_0$, since $\vtheta_t$ has not been found yet during the process of optimizing $\vtheta$ for task $t$.
We now have
\begin{align*}
        \vtheta_t &= \argmin_{\vtheta}\gL(\vtheta_0 + \sum_{s\neq t}\alpha_s\bm{\tau}_s + \alpha_t\bm{\tau}; \,\gD^{(t)})\\
        &= \argmin_{\vtheta}\gL(\vtheta_0 + \bm{\tau} - \bm{\tau} + \sum_{s\neq t}\alpha_s\bm{\tau}_s + \alpha_t\bm{\tau} ; \,\gD^{(t)})\\
        &= \argmin_{\vtheta}\gL(\vtheta - \bm{\tau} + \sum_{s\neq t}\alpha_s\bm{\tau}_s + \alpha_t\bm{\tau} ; \,\gD^{(t)}) \quad \because \vtheta = \vtheta_0 + \bm{\tau}\\
        &= \argmin_{\vtheta}\gL(\vtheta + \sum_{s\neq t}\alpha_s\bm{\tau}_s + (\alpha_t-1)\bm{\tau} ; \,\gD^{(t)}).
\end{align*}

\setcounter{figure}{0}
\renewcommand{\thefigure}{C\arabic{figure}}
\setcounter{table}{0}
\renewcommand{\thetable}{C\arabic{table}}
\section{Additional results}
\subsection{Effect of the number of steps during fine-tuning}
SAFT enables continued accuracy gains through longer training without the risk of overfitting.
Therefore, we conduct an ablation study to evaluate whether SAFT can enhance the performance of a single downstream task by doubling the number of training steps. 
In this study, we employ ASAM. 
As shown in Table~\ref{tab:over_steps}, it appears that SGD converges, as performance plateaus after 2000 steps. 
In contrast, SAFT continues to consistently improve performance up to 8000 steps.

\begin{table}[h]
\centering
\caption{
Average accuracies of fine-tuned ViT-B/32 over steps across the eight tasks.
}
\vspace{-8pt}
\setlength\tabcolsep{5.5pt}
\renewcommand{\arraystretch}{1.2}
\footnotesize
\begin{center}
\begin{tabular}{l@{\hskip .125in}|c|c|c}
\toprule
Fine-tuning steps & 2000 & 4000 & 8000 \\
\midrule
SGD & 90.37 & 90.21 & 90.48 \\
SAFT-ASAM & \textbf{90.50} & \textbf{90.84} & \textbf{91.03} \\
\bottomrule
\end{tabular}
\end{center}
\label{tab:over_steps}
\end{table}

\subsection{Fine-tuning performance of SAFT variants}
To empirically justify our choice of SAFT variants for our main experiments, we evaluate various the SAFT variants on the same datasets (i.e., eight vision tasks) as our main experiments. 
In particular, we investigate how the performance of a merged model changes when applying 
SAM~\citep{foret2020sharpness}, ASAM~\citep{kwon2021asam}, Friendly SAM~\citep{li2024friendly}, WA-SAM~\citep{kaddour2022when}, SAGM~\citep{wang2023sharpness}, PGN~\citep{zhao2022penalizing}, SSAM-F~\citep{mi2022make}, and SSAM-D~\citep{mi2022make}, as shown in Table~\ref{tab:app_ft_method}. 
The results demonstrate that ASAM brings better performance improvement, compared to SAM and other SAFT variants.
As a result, ASAM shows the best single-task performance among other variants.
Therefore, we use ASAM as a default SAFT variant in all experiments.

\begin{table}[h]
\centering
\caption{
Average accuracy of fine-tuned ViT-B/32 over steps across various SAM variants and fine-tuning methods.
}
\vspace{-8pt}
\setlength\tabcolsep{5.5pt}
\renewcommand{\arraystretch}{1.2}
\footnotesize
\begin{center}
\begin{tabular}{l@{\hskip .125in}|c}
\toprule
SAFT variants  & Accuracy \\
\midrule
SGD & 90.45  \\
SAFT-SAM & 90.16 \\
SAFT-ASAM & \textbf{91.29}  \\
SAFT-Friendly SAM & 90.29  \\
SAFT-WA-SAM & 91.06  \\
SAFT-SAGM & 90.96  \\
SAFT-PGN & 90.90  \\
SAFT-SSAM-F & 90.80  \\
SAFT-SSAM-D & 90.60 \\
\bottomrule
\end{tabular}
\end{center}
\label{tab:app_ft_method}
\end{table}

\subsection{Cross-task linearity (CTL)}
We provide additional results that demonstrate that SAFT satisfies cross-task linearity on other pairs of datasets in Figure~\ref{fig:more_ctl}.
We utilize ViT-B/32 as the image encoder to visualize this figure, just the same as Figure~\ref{fig:ctl}.
The results show that our method achieves lower CTL scores across all layers for various task combinations. 
This suggests that our approach better satisfies CTL for a broader range of data, implying improved weight disentanglement and task arithmetic properties.
Consequently, it can be concluded that our method reduces parameter interference and minimizes the performance gap.

\begin{figure}
    \centering
    \begin{subfigure}[b]{0.9\linewidth}
        \centering
        \includegraphics[width=\linewidth]{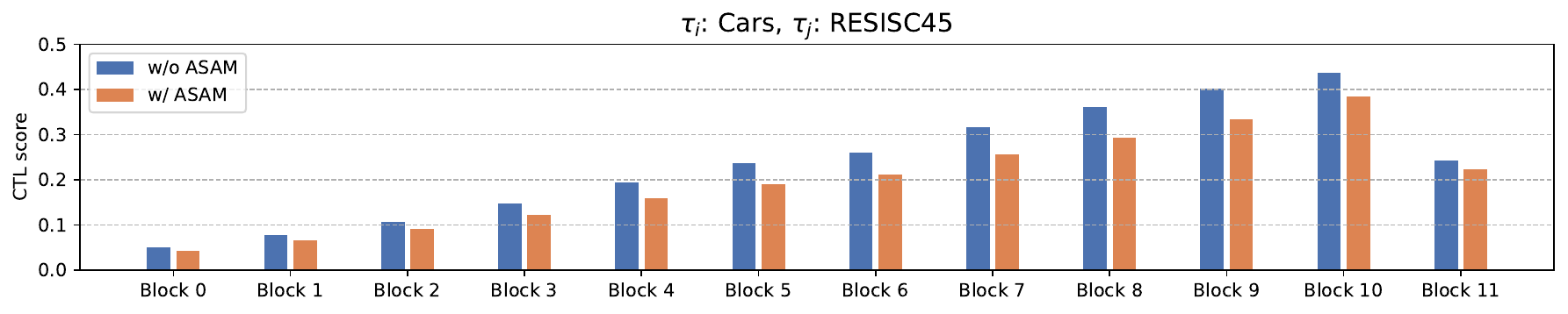}
    \end{subfigure}
    \begin{subfigure}[b]{0.9\linewidth}
        \centering
        \includegraphics[width=\linewidth]{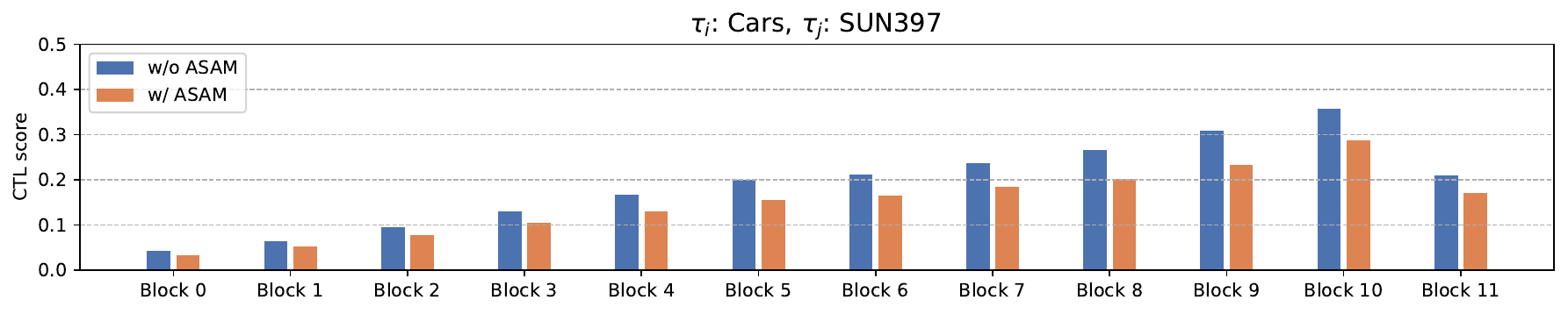}
    \end{subfigure}
    \begin{subfigure}[b]{0.9\linewidth}
        \centering
        \includegraphics[width=\linewidth]{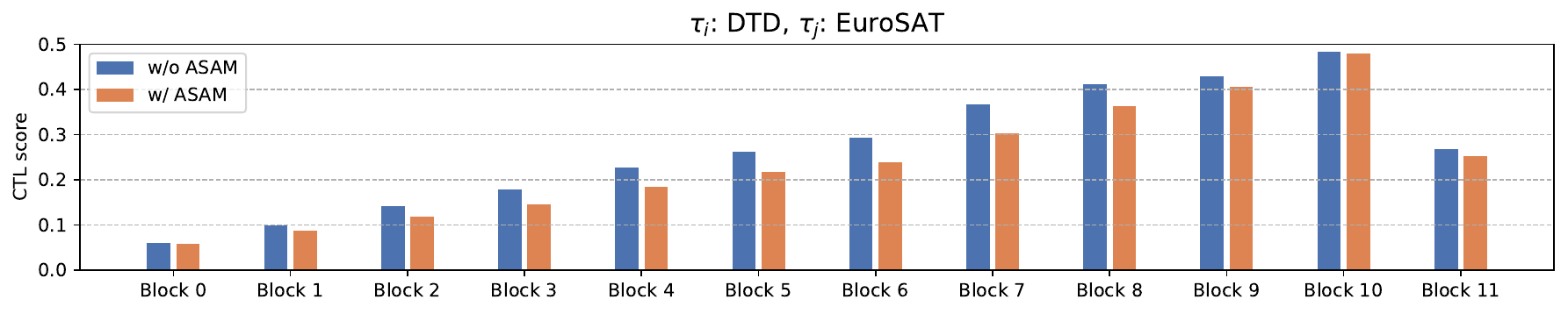}
    \end{subfigure}
    \begin{subfigure}[b]{0.9\linewidth}
        \centering
        \includegraphics[width=\linewidth]{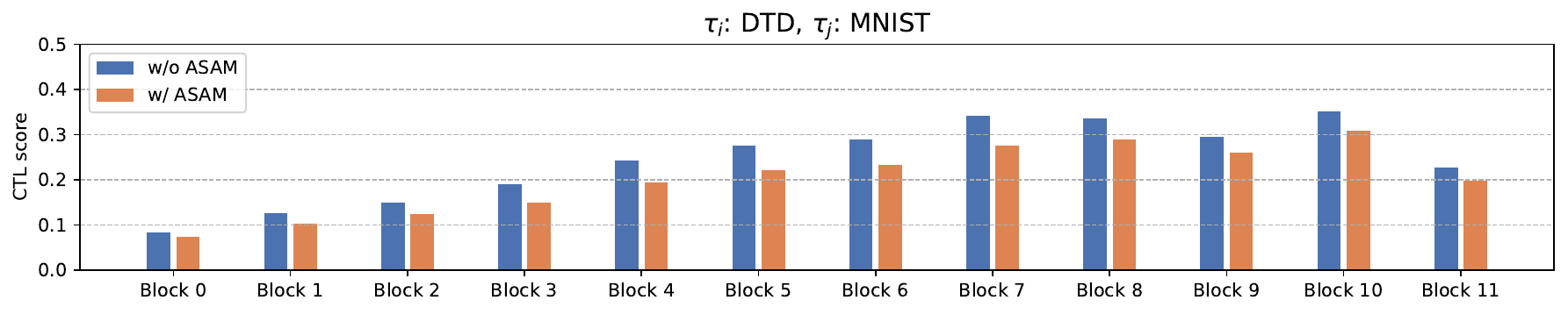}
    \end{subfigure}
    \begin{subfigure}[b]{0.9\linewidth}
        \centering
        \includegraphics[width=\linewidth]{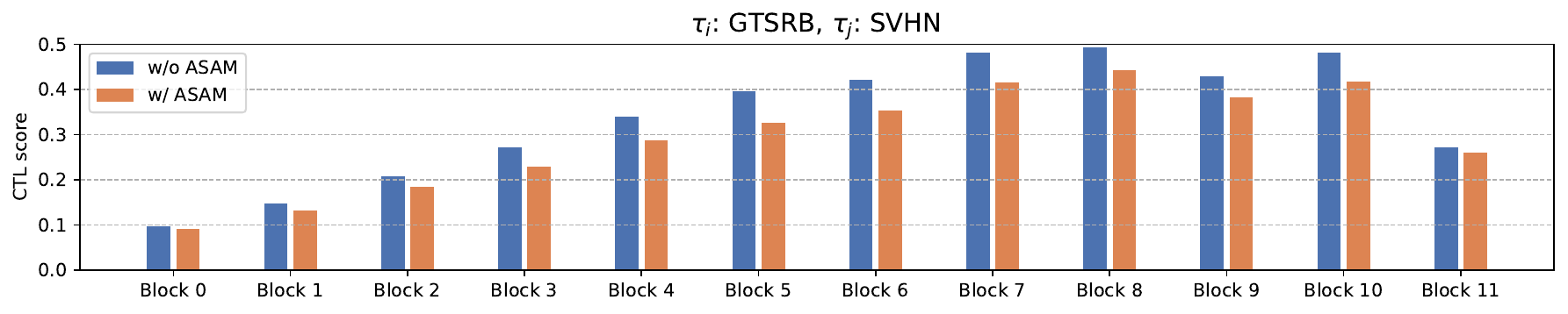}
    \end{subfigure}
    \begin{subfigure}[b]{0.9\linewidth}
        \centering
        \includegraphics[width=\linewidth]{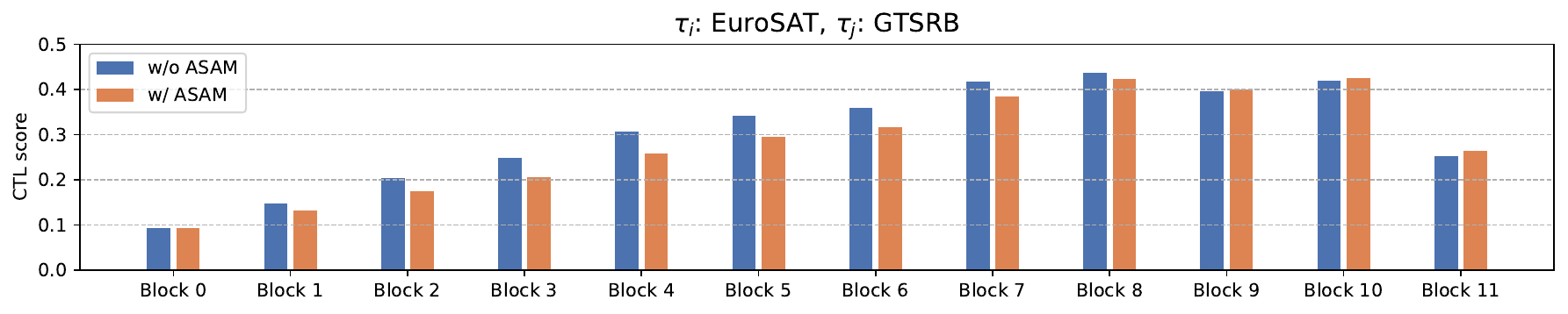}
    \end{subfigure}
    \begin{subfigure}[b]{0.9\linewidth}
        \centering
        \includegraphics[width=\linewidth]{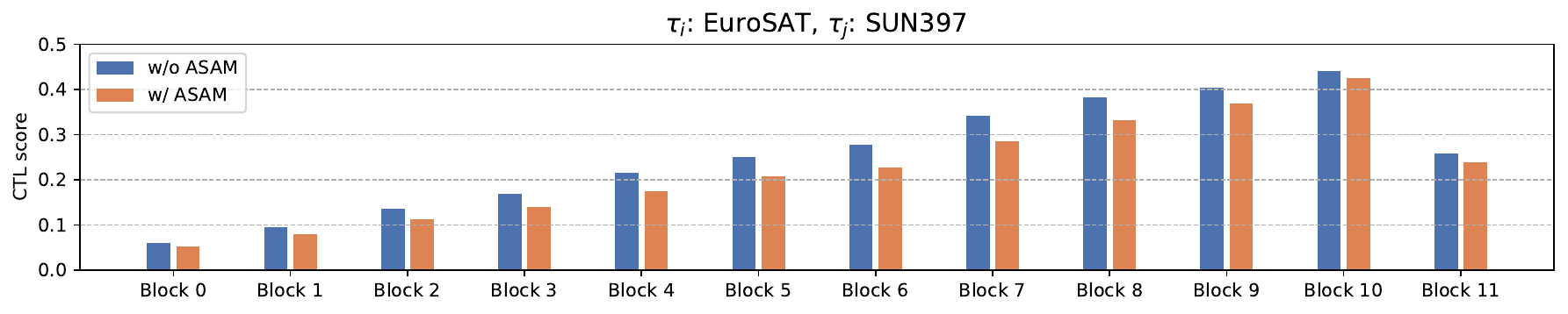}
    \end{subfigure}
    \begin{subfigure}[b]{0.9\linewidth}
        \centering
        \includegraphics[width=\linewidth]{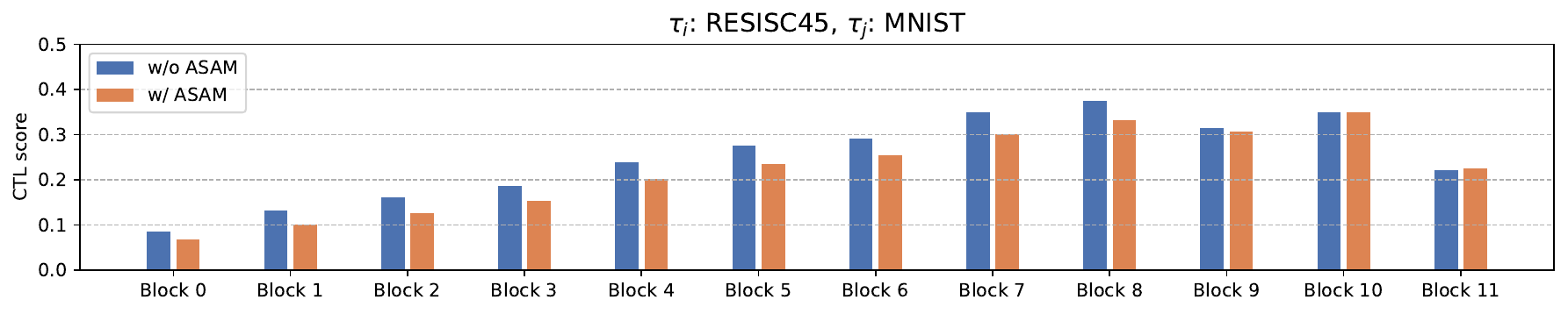}
    \end{subfigure}
    \caption{
  \textbf{Verification of all blocks CTL between the merged model and task-specific models}. 
  We compare $\mathbb{E}_{\mathcal{D}^{(s)}\cup\mathcal{D}^{(t)}}[1 - \cos^{(\ell)}(\vx; 2\lambda\bm{\tau}_s, 2\lambda\bm{\tau}_t)]$.
  We set the scaling factor $\lambda$ to $0.3$.
  }
\label{fig:more_ctl}
\end{figure}

\subsection{Loss between a merged model and task-specific models}
To demonstrate that SAFT indeed reduces the loss sharpness and the performance gap between a merged model and task-specific models, we visualize loss changes as we traverse along a linear path between a merged model and a task-specific model on a given task in Figure~\ref{fig:more_lb}.
Sharpness-aware fine-tuning indeed results in reduced loss barrier, leading to less performance gap as exhibited in less weight disentanglement error, better cross-task linearity, and better overall performance in our main paper.

\begin{figure}
    \centering
    \begin{subfigure}[b]{0.48\linewidth}
        \centering
        \includegraphics[width=\linewidth]{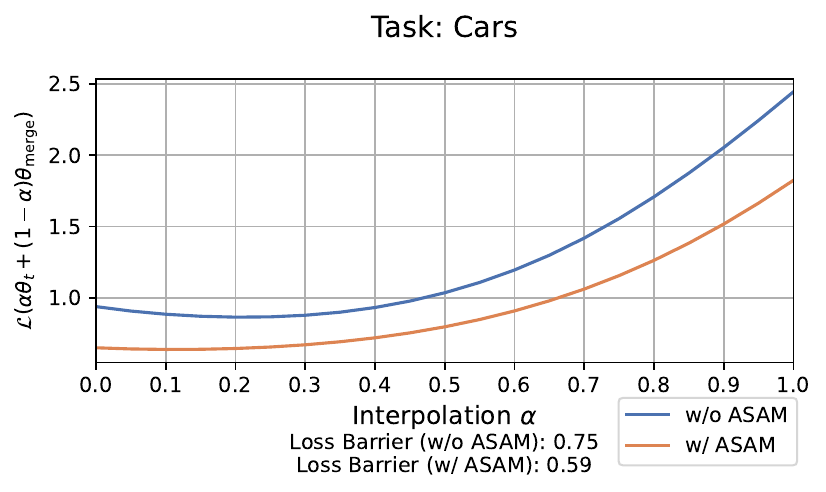}
    \end{subfigure}
    \begin{subfigure}[b]{0.48\linewidth}
        \centering
        \includegraphics[width=\linewidth]{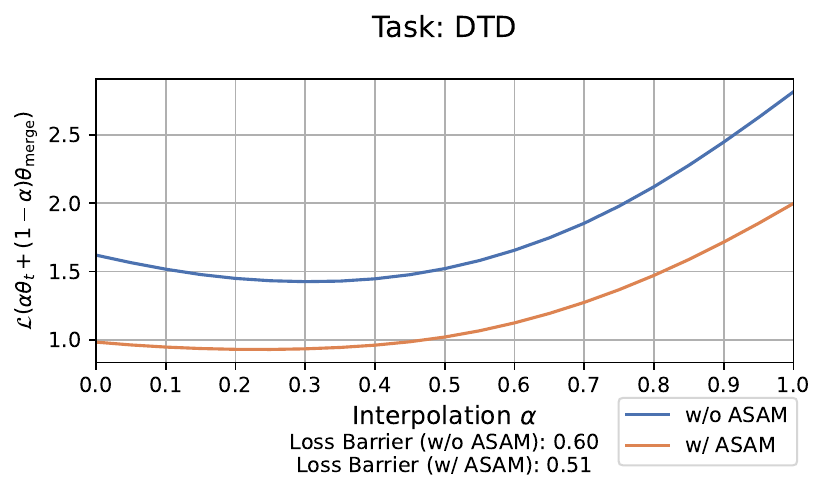}
    \end{subfigure}
    \begin{subfigure}[b]{0.48\linewidth}
        \centering
        \includegraphics[width=\linewidth]{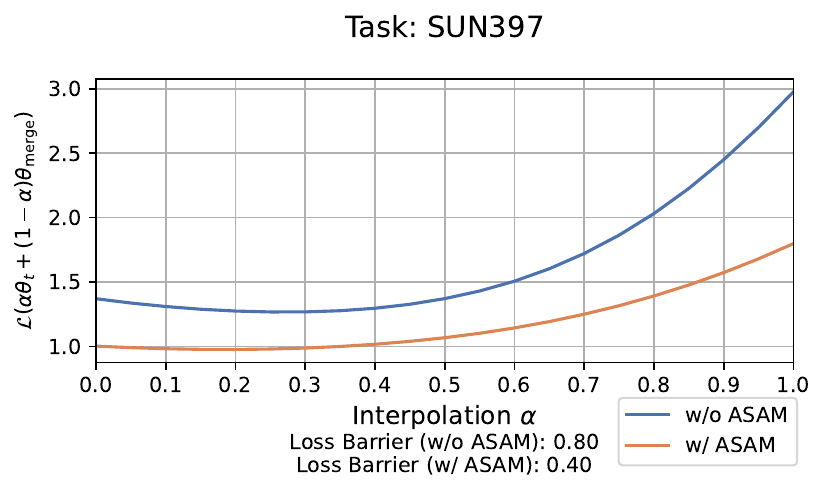}
    \end{subfigure}
    \begin{subfigure}[b]{0.48\linewidth}
        \centering
        \includegraphics[width=\linewidth]{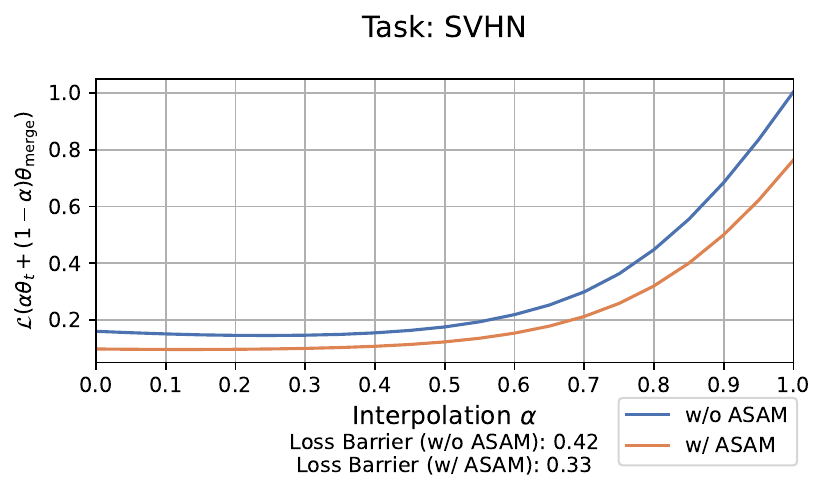}
    \end{subfigure}
    \caption{Test loss barrier between the merged model and each task-specific model.}
    \label{fig:more_lb}
\end{figure}

\subsection{Additional results of fine-tuning baselines and model merging methods}
Following Section~\ref{sec:main_results}, we conduct experiments on all combinations of fine-tuning baselines (SGD, FTTS, FTLO) and model merging methods (weight averaging, task arithmetic, TIES), as summarized in Table~\ref{tab:more_main}.
In the case of weight averaging, our method leads to performance improvements in most cases, and for task arithmetic, it achieves performance improvements in all cases.
For weight averaging, our method improves performance in most cases, while task arithmetic consistently yields performance improvements across all cases. 
In contrast, TIES shows performance improvements in only half of the cases. 
Upon closer examination, when linear fine-tuning methods such as FTTS and FTLO --- which regularize the model output to satisfy linearity --- are used without SAFT-ASAM, TIES generally outperforms task arithmetic. 
However, with SAFT-ASAM, TIES consistently performs worse than task arithmetic.

This seems that since the combination of linear fine-tuning and SAFT-ASAM has already enhanced weight disentanglement and reduced parameter interference, parameter trimming via TIES may rather remove critical parameters not noisy parameters, leading to performance degradation. 
Specifically, the combination of TIES and SAFT-ASAM delivers the best performance. 
Conversely, with FTTS and FTLO, task arithmetic paired with SAFT-ASAM achieves superior results. 
In some instances, TIES combined with SAFT-ASAM performs similarly to weight averaging.
Thus, for linear fine-tuning methods like FTTS and FTLO, combining SAFT-ASAM with TIES can negatively impact performance. 
Additional analysis of this behavior is reserved for future work.

\begin{table*}[h]
\caption{\textbf{Multi-task performance when merging a CLIP image encoder on eight tasks}. 
We report the average absolute and normalized accuracies for different five model merging methods. 
Results are shown for the six fine-tuning methods, categorized by whether SAFT-ASAM is applied.}
\setlength\tabcolsep{8pt}
\renewcommand{\arraystretch}{1.1}
\footnotesize
\begin{center}
\begin{tabular}{l@{\hskip .2in}|cc|cc|cc}
\toprule
\multirow{1}{*}{Merging methods ($\rightarrow$)} & \multicolumn{2}{c|}{Weight averaging} & \multicolumn{2}{c|}{Task arithmetic} & \multicolumn{2}{c}{TIES merging}\\
Fine-tuning baselines ($\downarrow$) & Abs. & Norm. & Abs. & Norm. & Abs. & Norm. \\
\midrule
& \multicolumn{6}{c}{ViT-B/32} \\
\cmidrule{1-7}
SGD & 65.72 & 72.91 & 68.23 & 75.47 & 74.57 & 82.29 \\
SAFT-ASAM (Ours) & \textbf{66.76} & \textbf{73.62} & \textbf{69.45} & \textbf{76.32} & \textbf{75.45} & \textbf{82.86} \\
\midrule
FTTS & 72.47 & 82.04 & 78.35 & 86.83 & \textbf{76.89} & \textbf{86.84} \\
FTTS w/ SAFT-ASAM (Ours) & \textbf{75.10} & \textbf{86.10} & \textbf{79.38} & \textbf{87.72} & 73.77 & 84.46 \\
\midrule
FTLO & \textbf{65.96} & \textbf{73.83} & 75.93 & 85.74 & \textbf{77.39} & \textbf{85.89} \\
FTLO w/ SAFT-ASAM (Ours) & 65.34 & 72.78 & \textbf{77.49} & \textbf{88.77}  & 76.30 & 84.62 \\
\midrule
& \multicolumn{6}{c}{ViT-B/16} \\
\cmidrule{1-7}
SGD & 71.58 & 77.37 & 73.40 & 79.31 & 77.94 & 84.04 \\
SAFT-ASAM (Ours) & \textbf{71.84} & \textbf{77.53} & \textbf{76.77} & \textbf{82.50} & \textbf{80.14} & \textbf{86.23} \\
\midrule
FTTS & 77.20 & 84.87 & 79.37 & 87.33 & \textbf{81.09} & \textbf{89.05} \\
FTTS w/ SAFT-ASAM (Ours) & \textbf{78.09} & \textbf{86.45} & \textbf{79.78} & \textbf{88.26} & 78.41 & 86.72 \\
\midrule
FTLO & 70.97 & \textbf{77.11} & 80.00 & 86.55 & 78.25 & 84.91 \\
FTLO w/ SAFT-ASAM (Ours) & \textbf{71.03} & 76.78 & \textbf{82.59} & \textbf{89.11} & \textbf{79.49} & \textbf{85.92} \\
\bottomrule
\end{tabular}
\end{center}
\label{tab:more_main}
\end{table*}

\begin{table*}[h]
\caption{
\textbf{Multi-task performance of task arithmetic with and without hyperparameter tuning}.
We compare the average absolute and normalized accuracies of task arithmetic whether the hyperparameter is tuned. The fixed hyperparameter \( \alpha \) is $0.4$ for all eight vision tasks. 
}
\setlength\tabcolsep{6pt}
\renewcommand{\arraystretch}{1.1}
\footnotesize
\begin{center}
\begin{tabular}{l@{\hskip .1in}|cc|cc|cc|cc}
\toprule
\multirow{2}{*}{Fine-tuning baselines ($\downarrow$)} & \multicolumn{2}{c|}{w/o tuning} & \multicolumn{2}{c|}{w/ tuning} & \multicolumn{2}{c|}{w/o tuning} & \multicolumn{2}{c}{w/ tuning} \\
 & Abs. & Norm. & Abs. & Norm. & Abs. & Norm. & Abs. & Norm. \\
\midrule
& \multicolumn{4}{c|}{ViT-B/32} & \multicolumn{4}{c}{ViT-B/16} \\
\cmidrule{1-9}
SGD & 46.34 & 48.72 & 68.23 & 75.47  & 46.91 & 49.98 & 73.40 & 79.31  \\
SAFT-ASAM (Ours) & \textbf{57.27} & \textbf{62.06} & \textbf{69.45} & \textbf{76.32} & \textbf{71.37} & \textbf{76.34} & \textbf{76.77} & \textbf{82.50} \\
\midrule
FTTS & 72.89 & 81.97 & 78.35 & 86.83 & 76.69 & 84.05 & 79.37 & 87.33 \\
FTTS w/ SAFT-ASAM (Ours) & \textbf{75.21} & \textbf{85.78} & \textbf{79.38} & \textbf{87.72} & \textbf{78.87} & \textbf{87.17} & \textbf{79.78} & \textbf{88.26} \\
\midrule
FTLO & 48.20 & 55.29 & 75.93 & 85.74 & 77.36 & 83.60 & 80.00 & 86.55 \\
FTLO w/ SAFT-ASAM (Ours) & \textbf{79.68} & \textbf{87.92} & \textbf{77.49} & \textbf{88.77} & \textbf{82.50} & \textbf{88.95} & \textbf{82.59} & \textbf{89.11} \\
\bottomrule
\end{tabular}
\end{center}
\label{tab:hyp}
\end{table*}

\subsection{Merging with fixed $\alpha_t$}
\label{apdix:c-6}
Previous research~\citep{ilharco2023editing,jin2023dataless,matena2022merging,yadav2023ties} on model merging has focused on finding better merged models through hyperparameter tuning. 
However, such methods become increasingly costly as the number of hyperparameters grows, and they need to be re-applied whenever tasks are added or changed. 
Therefore, it is essential to create a robust merged model that performs well regardless of the selected hyperparameters.

Our method achieves robustness by identifying flatter minima for joint loss and weight disentanglement compared to SGD, enabling the discovery of optimal hyperparameters across a wider range of conditions. 
To support this claim, we evaluate task arithmetic by fixing the task coefficients \( \alpha \) to $0.4$ for all merging tasks, following the recommendation of \cite{ilharco2023editing}. 
As shown in Table~\ref{tab:hyp}, our method outperforms other fine-tuning baselines in all cases, achieving improvements of up to 30\% in both absolute accuracy and normalized accuracy.  
Additionally, there are several cases where the performances are nearly identical to those of hyperparameter tuning.
Therefore, our method ensures that a merged model with reliable performance can be obtained, even when arbitrary hyperparameters are chosen.

\subsection{Multi-task performance of other flat-minima techniques}

\begin{table*}[h]
\caption{\textbf{Multi-task performance across different flat-minima techniques}. We compare the average absolute and normalized accuracies of ViT-B/32 for five fine-tuning methods including three flat-minima techniques: SWA, RWP, and SAGM. We also compare the performance of merged model with and without hyperparameter tuning. All cases are merged with eight vision tasks by task arithmetic.
}
\setlength\tabcolsep{5.5pt}
\renewcommand{\arraystretch}{1.1}
\footnotesize
\begin{center}
\begin{tabular}{l@{\hskip .125in}|cc|cc}
\toprule
\multirow{2}{*}{Fine-tuning baselines ($\downarrow$)} & \multicolumn{2}{c|}{w/o tuning} & \multicolumn{2}{c}{w/ tuning} \\
 & Abs. & Norm. & Abs. & Norm. \\
\midrule
SGD & 46.34 & 48.72 & 68.23 & 75.47  \\
\midrule
SWA & 48.94 & 52.46 & 68.58 & 76.11  \\
RWP & 34.97 & 36.58 & 62.48 & 73.02  \\
SAGM & 40.18 & 42.93 & 64.36 & 71.16  \\
\midrule
SAFT-ASAM (Ours) & \textbf{57.27} & \textbf{62.06} & \textbf{69.45} & \textbf{76.32}  \\
\bottomrule
\end{tabular}
\end{center}
\label{tab:other_flatminima}
\end{table*}

We also evaluate the performance of other flat-minima techniques, in addition to SAFT variants like ASAM. 
Flat-minima techniques, including SWA~\citep{izmailov2018averaging}, RWP~\citep{li2022efficient}, and SAGM~\citep{wang2023sharpness}, are fine-tuning methods designed to minimize the loss while finding flat-minima during model training. 
As shown in Table~\ref{tab:other_flatminima}, both our method and SWA improve multi-task performance compared to SGD. 
However, RWP and SAGM show worse performance than SGD.
In addition, as discussed in Appendix~\ref{apdix:c-6}, we also present the results of performance evaluation without hyperparameter tuning in Table~\ref{tab:other_flatminima}. 
Our method not only achieves the best performance when the hyperparameters are tuned but also outperforms in cases without tuning. 
Furthermore, the performance gap between our method and SGD, as well as other flat-minima techniques, widens significantly in these scenarios. 
This demonstrates that our method is superior to other flat-minima techniques and exhibits less performance degradation due to variations in parameter settings.

The primary difference among flat-minima techniques lies in the strategies used to derive perturbations.
This seems to influence how effectively these techniques can reduce the performance gap between each task-specific model and a merged model. 
Our method introduces perturbations by minimizing the loss difference between the current point in the parameter space and the point with the highest loss in its neighborhood during fine-tuning. 
This approach closely aligns with the objective of model merging, which aims to minimize the loss difference between the merged model and the individual task-specific models.
Therefore, perturbation strategies derived from fine-tuning objectives similar to the model merging objective could result in greater performance improvements compared to other flat-minima techniques.

\subsection{Results on natural language processing tasks}

\begin{table*}[h]
\caption{
\textbf{Multi-task performance of the merged model across four NLP tasks}. 
We report the average absolute and normalized accuracies on four GLUE benchmark tasks: CoLA, MPRC, RTE, and SST-2.
We fine-tune Flan-T5-base using either SGD or SAFT-ASAM and merge the four task-specific models.
}
\setlength\tabcolsep{4.2pt}
\renewcommand{\arraystretch}{1.1}
\footnotesize
\begin{center}
\begin{tabular}{l@{\hskip .065in}|cc|cc|cc|cc|cc}
\toprule
\multirow{1}{*}{Fine-tuning} & \multicolumn{2}{c|}{CoLA} & \multicolumn{2}{c|}{MRPC} & \multicolumn{2}{c|}{RTE} & \multicolumn{2}{c|}{SST-2} &  \multicolumn{2}{c}{Average} \\
baselines ($\downarrow$) & Abs. & Norm. & Abs. & Norm. & Abs. & Norm. & Abs. & Norm. & Abs. & Norm. \\
\midrule
SGD & 58.77 & 75.58 & 25.74 & 29.75 & 37.55 & 43.52 & 64.11 & 68.68 & 46.54 & 54.38 \\
FTTS & 66.06 & 92.61 & 28.19 & 34.96 & 1.81 & 2.35 & \textbf{87.39} & \textbf{94.90} & 45.86 & 56.21 \\
FTLO & 66.83 & 96.67 & \textbf{67.40} & \textbf{79.71} & 0 & 0 & 13.30 & 14.50 & 36.88 & 47.72 \\
SAFT-ASAM (Ours) & \textbf{68.65} & \textbf{99.31} & 42.16 & 52.92 & \textbf{49.82} & \textbf{60.00} & 45.30 & 49.38 & \textbf{51.48} & \textbf{75.47} \\
\bottomrule
\end{tabular}
\end{center}
\label{tab:nlp}
\end{table*}

To demonstrate the effectiveness of our method in other domains, we evaluate our method on NLP tasks. 
Following the evaluation settings of \cite{ilharco2023editing}, we fine-tune the Flan-T5-base~\citep{raffel2019exploring, wei2022finetuned} on four natural language processing (NLP) tasks: CoLA, MRPC, RTE, and SST-2 in GLUE benchmark~\citep{wang2019glue}.
All fine-tuning processes start from the Flan-T5-base pre-trained checkpoint available on HuggingFace.
We fine-tune each model for 8000 iterations with a batch size of 16 and a learning rate of \( 10^{-5} \).
AdamW is used as the optimizer, and a linear annealing approach without warmup is applied as the learning rate scheduler.
For efficient fine-tuning, we convert all downstream NLP tasks into a text-to-text format, following the approach in \cite{jin2024fine}.
We measure the multi-task performance of the multi-task model merged by all four tasks using task arithmetic. 
 
Table~\ref{tab:nlp} indicates that FTTS and FTLO exhibit a tendency for certain finetuned parameters to exert undue influence after the merging process. 
This dominance leads to significant performance degradation on specific tasks, notably RTE, where both FTTS and FTLO experience extreme performance drops. 
These results suggest that FTTS and FTLO are still susceptible to parameter interference. 
Although they achieve commendable performance on some tasks, their performance is inconsistent across the entire task set. 
In contrast, our proposed method effectively mitigates parameter interference, resulting in a more balanced performance profile across all tasks. 
This balanced approach leads to a superior average performance, as demonstrated by the highest absolute and normalized accuracy, with the normalized accuracy being approximately 20\% higher than the second-best performing method, FTTS.

The parameter dominance observed in FTTS and FTLO is further substantiated by an analysis of the predicted text from the merged model. 
For clarity, the label texts for CoLA are "unacceptable" or "acceptable", for MRPC "not\_equivalent" or "equivalent", for RTE "entailment" or "not\_entailment", and for SST-2 "negative" or "positive".
This label text configuration aligns with the settings established in \cite{jin2024fine}. 
Our analysis reveals that texts predicted by FTLO frequently contain or resemble "acceptable", "unacceptable", "equivalent", or "not\_equivalent", irrespective of the input text data. 
While these outputs may be appropriate for CoLA and MRPC datasets, they are irrelevant for RTE and SST-2, resulting in the observed performance decline on these tasks. 
This analysis highlights that after merging, FTTS is significantly influenced by the parameters finetuned for specific tasks like SST-2 and CoLA. FTLO exhibits a similar pattern of influence.
Conversely, although our predictions of our method may not always be correct, it consistently predicts one of the correct text labels for the corresponding task of the input data, across all four tasks.
This consistency demonstrates an improved ability to mitigate parameter interference. 
Notably, while FTLO and FTTS outperform our method on only one task each, our method achieves superior performance on the remaining three tasks. 
Specifically, our method outperforms FTLO on CoLA, RTE, and SST-2, while also outperforming FTTS on CoLA, MRPC, and RTE.
This result demonstrates an improvement in overall performance and suggests a greater resilience to parameter interference.

\subsection{Training costs of fine-tuning}
\begin{table*}[h]
\caption{{Training cost of SAFT}. 
}
\setlength\tabcolsep{7.5pt}
\renewcommand{\arraystretch}{1.1}
\footnotesize
\begin{center}
\begin{tabular}{l@{\hskip .125in}|cc|cc}
\toprule
\multirow{2}{*}{Fine-tuning baselines ($\downarrow$)} & \multicolumn{2}{c|}{ViT-B/32} & \multicolumn{2}{c}{ViT-B/16} \\
 & Time (it/s) & VRAM (GB) & Time (it/s) & VRAM (GB) \\
\midrule
SGD & 2.53 & 7.3 & 0.68 & 21.5 \\
FTTS & 1.30 & 12.6 & 0.37 & 37.2 \\
FTLO & 3.11 & 5.8 & 0.84 & 19.0 \\
SAFT-ASAM (Ours) & 1.34 & 7.8 & 0.34 & 21.6 \\
\bottomrule
\end{tabular}
\end{center}
\label{tab:time}
\end{table*}

Table~\ref{tab:time} presents a comparison of training costs between SGD and our method across various models and fine-tuning methods. 

We use AdamW as an optimizer, setting the batch size to $128$.
Additionally, all training is conducted using NVIDIA Quadro RTX 8000 GPUs.

The use of SAFT-ASAM leads to a nearly twofold increase in training time, whereas VRAM usage experiences only a minor increment.

Recently, there has been active research aimed at reducing the computational cost of SAM~\citep{du2022sharpness,liu2022towards}. 
Model merging is an approach designed to efficiently build multi-task models, and since our work seeks to establish a connection between model merging and SAFT, we believe our research can significantly contribute to works focused on improving the efficiency of SAFT.

\setcounter{figure}{0}
\renewcommand{\thefigure}{D\arabic{figure}}
\setcounter{table}{0}
\renewcommand{\thetable}{D\arabic{table}}
\section{Proof for theorem 1}
\label{apdix:d}
\setcounter{theorem}{0}
In this section, we provide a proof for Theorem~\ref{theorem:sam_improve_JTL_linearity}, which is restated below for the convenience:
\begin{theorem}[SAFT induces joint-task loss linearity]
\label{theorem:sam_improve_JTL_linearity_restate}
If models, parameterized by \(\vtheta_s\) and \(\vtheta_t\), are obtained by fine-tuning from a common pre-trained model via SAFT on their respective datasets $\gD_s$ and $\gD_t$, the models are linearly connected on the loss landscape over the joint-task datasets ($\gD=\gD_s\cup\gD_t$).
\end{theorem}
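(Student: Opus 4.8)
The plan is to reduce the approximate JTL-linearity claim to a statement about the curvature of the joint-task loss along the segment joining $\vtheta_s$ and $\vtheta_t$, and then argue that SAFT implicitly suppresses that curvature. First I would parametrize the interpolation path by $\phi(\beta) = \gL_{JTL}(\vtheta_t + \beta(\vtheta_s - \vtheta_t);\gD)$ for $\beta\in[0,1]$, so that the left-hand side of the condition in Definition~\ref{def:joint-task_loss_linearity} is $\phi(\alpha)$ and the right-hand side is the secant value $\alpha\phi(1) + (1-\alpha)\phi(0)$. The deviation from linearity is then exactly the gap between a smooth function and its chord, which by the integral form of Taylor's theorem equals $-\int_0^1 K(\alpha,\beta)\,\phi''(\beta)\,d\beta$ for the nonnegative Green's function $K$ of the segment (with $\int_0^1 K(\alpha,\beta)\,d\beta = \tfrac{1}{2}\alpha(1-\alpha)$). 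Hence the entire claim hinges on controlling $\phi''(\beta)$.

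Next I would express $\phi''(\beta)$ through the Hessian. Writing $\vd = \vtheta_s - \vtheta_t$, we have
\begin{equation}
\phi''(\beta) = \vd^\top \nabla^2 \gL_{JTL}(\vtheta_t + \beta\vd;\gD)\,\vd \le \lambda_{\max}\!\left(\nabla^2\gL_{JTL}(\vtheta_t + \beta\vd;\gD)\right)\lVert\vd\rVert^2,
\end{equation}
and, because $\gL_{JTL} = \gL(\cdot;\gD_s) + \gL(\cdot;\gD_t)$, its Hessian splits as the sum of the two per-task Hessians. Thus the deviation from JTL linearity is bounded by the dominant eigenvalue of the summed Hessian along the path times $\lVert\vtheta_s-\vtheta_t\rVert^2$, converting the problem into showing that SAFT produces endpoints at which this dominant eigenvalue is small.

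I would then invoke the defining mechanism of SAM/ASAM. At a near-stationary point the inner maximization in Equation~\ref{eq:sam_original} admits the second-order expansion $\max_{\lVert\bm{\epsilon}\rVert\le\rho}\gL(\vtheta+\bm{\epsilon}) \approx \gL(\vtheta) + \tfrac{\rho^2}{2}\lambda_{\max}(\nabla^2\gL(\vtheta))$, so minimizing the SAFT objective explicitly penalizes the top Hessian eigenvalue (for ASAM, the rescaled Hessian $T_\vtheta \nabla^2\gL\, T_\vtheta$ induced by the parameter-dependent normalization). Consequently each task-specific model $\vtheta_s,\vtheta_t$ lies in a region of reduced dominant curvature relative to the SGD solution, which shrinks $\phi''$ and hence the linearity gap, giving the desired conclusion.

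The main obstacle I anticipate is bridging the per-task, per-endpoint curvature control to a bound that holds uniformly along the whole interpolation path for the summed loss. SAFT applied on $\gD_s$ flattens $\nabla^2\gL(\cdot;\gD_s)$ near $\vtheta_s$, but offers no direct control of $\nabla^2\gL(\cdot;\gD_t)$ there, nor of either Hessian at intermediate points $\beta\in(0,1)$. Closing this gap requires an additional Lipschitz-Hessian (smoothness) assumption so that endpoint flatness propagates across the short segment, together with an argument that the cross-task curvature contributions stay controlled; I expect this to be exactly where the ``approximately'' in the statement earns its keep, and I would make the supporting regularity assumptions explicit at that step.
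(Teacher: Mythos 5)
Your proposal is correct in spirit and lands on the same core reduction as the paper --- deviation from JTL linearity $\approx -\tfrac{1}{2}\alpha(1-\alpha)\,\vd^\top\mathbf{H}\,\vd$, followed by the claim that SAFT suppresses the dominant Hessian eigenvalue --- but the technical route differs in an instructive way. The paper (Appendix~\ref{apdix:d}) splits the gap $\delta$ by dataset, $\delta=\delta_s+\delta_t$, and Taylor-expands the $\gD_s$ part around $\vtheta_s$ and the $\gD_t$ part around $\vtheta_t$; after cancellation this leaves only $\mathbf{H}_s=\nabla^2_\vtheta\gL(\vtheta_s;\gD_s)$ and $\mathbf{H}_t=\nabla^2_\vtheta\gL(\vtheta_t;\gD_t)$, i.e.\ exactly the two endpoint Hessians that SAFT is known to flatten (the paper cites \cite{agarwala2023sam} for this, where you instead re-derive the $\tfrac{\rho^2}{2}\lambda_{\max}$ penalty from the SAM objective --- a nice self-contained touch). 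The price of the paper's pairing is that all third-order remainders ($R_s$, $R'_s$, $R_t$, $R'_t$), which scale with $\lVert\vtheta_t-\vtheta_s\rVert^3$ and are \emph{not} small for independently fine-tuned models, are absorbed into an unquantified $\epsilon$ in Equation~\ref{eq:delta_eig}. Your Green's-function representation is exact --- no discarded remainder --- but it makes the full path-and-cross-task curvature $\vd^\top(\nabla^2\gL(\cdot;\gD_s)+\nabla^2\gL(\cdot;\gD_t))\vd$ at intermediate $\beta$ appear explicitly, which is precisely the quantity SAFT does not directly control; the Hessian-Lipschitz assumption you propose is what would be needed to close this. In other words, the obstacle you flag at the end is genuine, but it is not avoided by the paper either --- it is merely relocated into the uncontrolled remainder $\epsilon$ and into the implicit assumption that curvature at $\vtheta_s$ under $\gD_s$ (resp.\ $\vtheta_t$ under $\gD_t$) captures the relevant behavior. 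Your version makes the hidden cost visible and states what additional regularity would make the theorem rigorous; the paper's version yields a cleaner-looking bound in terms of only $\lambda_s+\lambda_t$ at the cost of that hidden slack.
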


\begin{proof}
Let \( \delta \) represent the difference between  the linear interpolation of Joint-Task Losses incurred by task-specific models parameterized by $\vtheta_s$ and $\vtheta_t$ and Joint-Task Loss incurred by a interpolated model parameterized by $\alpha \vtheta_s + (1 - \alpha) \vtheta_t$:
\begin{align}
\delta &= \gL_{JTL}(\alpha \vtheta_s + (1 - \alpha) \vtheta_t; \gD) - \alpha \gL_{JTL}(\vtheta_s ; \gD) - (1 - \alpha) \gL_{JTL}(\vtheta_t ; \gD).   
\end{align}

\begin{figure}
  \centering
  \includegraphics[width=\linewidth]{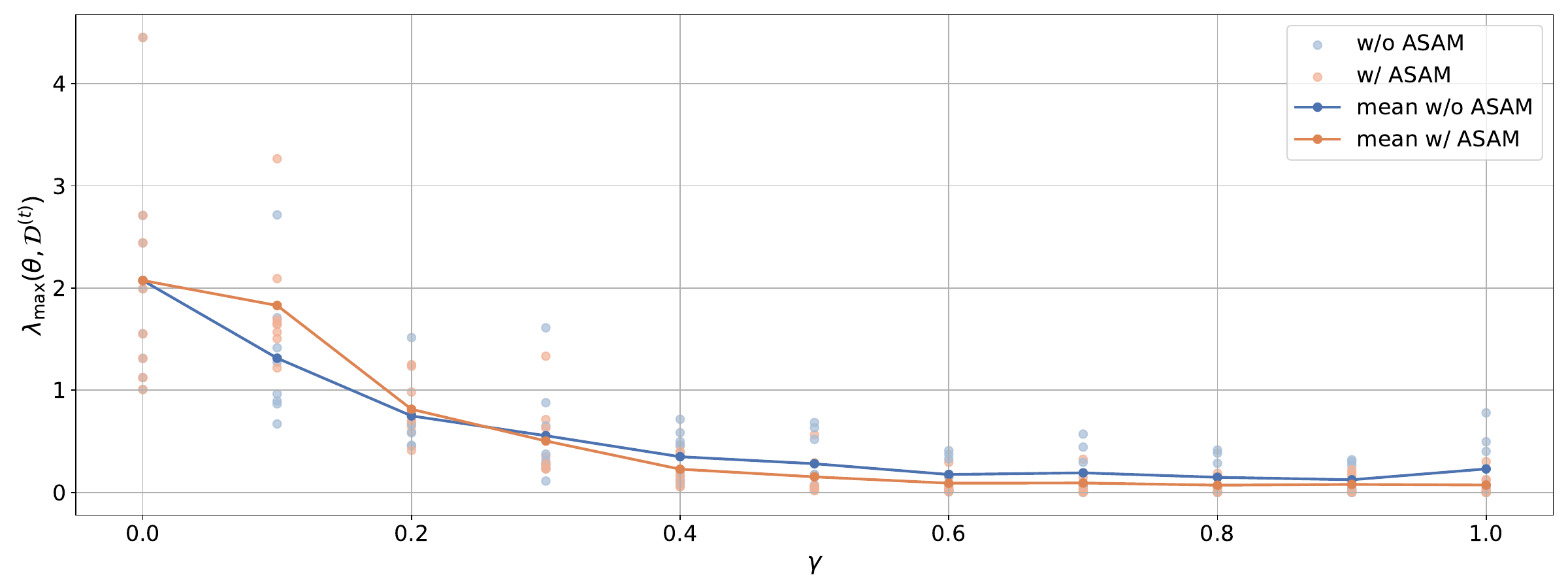}

    \caption{
    \textbf{Comparison of the dominant Hessian eigenvalue for parameters on the line segment between the pre-trained parameters and fine-tuned parameters.}
    We compare the dominant Hessian eigenvalue \( \lambda_{\max}(\vtheta ; \gD^{(t)}) \) of parameter \( \vtheta \) along the line segment \( \overline{\vtheta_0 \vtheta_t} \), where \( \vtheta = \vtheta_0 + \gamma (\vtheta_t - \vtheta_0) \) for \( \gamma \in [0, 1] \), \(\gD^{(t)}\) denotes the dataset for task k.
    The line represents the mean of the dominant Hessian eigenvalues for all tasks.
    }

  \label{fig:mean_max_hessian_eigenvalue}
\end{figure}

\begin{align}
\delta &= \gL_{JTL}(\alpha \vtheta_s + (1 - \alpha) \vtheta_t; \gD) - \alpha \gL_{JTL}(\vtheta_s; \gD) - (1 - \alpha) \gL_{JTL}(\vtheta_t; \gD) \nonumber \\
&= \left[ \gL(\alpha \vtheta_s + (1 - \alpha) \vtheta_t; \gD_s) - \alpha \gL(\vtheta_s; \gD_s) - (1 - \alpha) \gL(\vtheta_t; \gD_s) \right] \nonumber \\
&\quad + \left[ \gL(\alpha \vtheta_s + (1 - \alpha) \vtheta_t; \gD_t) - \alpha \gL(\vtheta_s; \gD_t) - (1 - \alpha) \gL(\vtheta_t; \gD_t) \right] \nonumber \\
&= \delta_s + \delta_t, \\
where \quad
\delta_s &= \gL(\alpha \vtheta_s + (1 - \alpha) \vtheta_t; \gD_s) - \alpha \gL(\vtheta_s; \gD_s) - (1 - \alpha) \gL(\vtheta_t; \gD_s),\nonumber \\
\delta_t &= \gL(\alpha \vtheta_s + (1 - \alpha) \vtheta_t; \gD_t) - \alpha \gL(\vtheta_s; \gD_t) - (1 - \alpha) \gL(\vtheta_t; \gD_t)\nonumber.
\end{align}

Performing a third-order Taylor expansion of \( \gL(\alpha \vtheta_s + (1 - \alpha) \vtheta_t; \gD_s) \) around \( \vtheta_s \):
\begin{align}
\gL(\alpha \vtheta_s + (1 - \alpha) \vtheta_t; \gD_s) &= \gL(\vtheta_s; \gD_s) + (1 - \alpha) \nabla_\vtheta \gL(\vtheta_s; \gD_s)^\top (\vtheta_t - \vtheta_s) \nonumber \\
&\quad + \frac{1}{2} (1 - \alpha)^2 (\vtheta_t - \vtheta_s)^\top \mathbf{H}_s (\vtheta_t - \vtheta_s) + R_s, 
\end{align}
where \( \mathbf{H}_s = \nabla^2_\vtheta \gL(\vtheta_s; \gD_s) \) and \( R_s \) is the remainder term.

Similarly, expand \( \gL(\vtheta_t; \gD_s) \) around \( \vtheta_s \):
\begin{align}
\gL(\vtheta_t; \gD_s) &= \gL(\vtheta_s; \gD_s) + \nabla_\vtheta \gL(\vtheta_s; \gD_s)^\top (\vtheta_t - \vtheta_s) + \frac{1}{2} (\vtheta_t - \vtheta_s)^\top \mathbf{H}_s (\vtheta_t - \vtheta_s) + R'_s.
\end{align}

Multiply both sides by \( (1 - \alpha) \):
\begin{align}
(1 - \alpha) \gL(\vtheta_t; \gD_s) &= (1 - \alpha) \gL(\vtheta_s; \gD_s) + (1 - \alpha) \nabla_\vtheta \gL(\vtheta_s; \gD_s)^\top (\vtheta_t - \vtheta_s) \nonumber \\
&\quad + \frac{1}{2} (1 - \alpha) (\vtheta_t - \vtheta_s)^\top \mathbf{H}_s (\vtheta_t - \vtheta_s) + (1 - \alpha) R'_s.
\end{align}

Compute \( \delta_s \):
\begin{align}
\delta_s &= \gL(\alpha \vtheta_s + (1 - \alpha) \vtheta_t; \gD_s) - \alpha \gL(\vtheta_s; \gD_s) - (1 - \alpha) \gL(\vtheta_t; \gD_s) \nonumber \\
&= \left[ \gL(\vtheta_s; \gD_s) + (1 - \alpha) \nabla_\vtheta \gL(\vtheta_s; \gD_s)^\top (\vtheta_t - \vtheta_s) + \frac{1}{2} (1 - \alpha)^2 (\vtheta_t - \vtheta_s)^\top \mathbf{H}_s (\vtheta_t - \vtheta_s) + R_s \right] \nonumber \\
&\quad - \alpha \gL(\vtheta_s; \gD_s) - \left[ (1 - \alpha) \gL(\vtheta_s; \gD_s) + (1 - \alpha) \nabla_\vtheta \gL(\vtheta_s; \gD_s)^\top (\vtheta_t - \vtheta_s) \right. \nonumber \\
&\quad + \left. \frac{1}{2} (1 - \alpha) (\vtheta_t - \vtheta_s)^\top \mathbf{H}_s (\vtheta_t - \vtheta_s) + (1 - \alpha) R'_s \right].
\end{align}

Simplify the expression:
\begin{align}
\delta_s &= \gL(\vtheta_s; \gD_s) - \alpha \gL(\vtheta_s; \gD_s) - (1 - \alpha) \gL(\vtheta_s; \gD_s) \nonumber \\
&\quad + (1 - \alpha) \nabla_\vtheta \gL(\vtheta_s; \gD_s)^\top (\vtheta_t - \vtheta_s) - (1 - \alpha) \nabla_\vtheta \gL(\vtheta_s; \gD_s)^\top (\vtheta_t - \vtheta_s) \nonumber \\
&\quad + \frac{1}{2} (1 - \alpha)^2 (\vtheta_t - \vtheta_s)^\top \mathbf{H}_s (\vtheta_t - \vtheta_s) - \frac{1}{2} (1 - \alpha) (\vtheta_t - \vtheta_s)^\top \mathbf{H}_s (\vtheta_t - \vtheta_s) \nonumber \\
&\quad + R_s - (1 - \alpha) R'_s.\nonumber\\
&= -\frac{1}{2} \alpha (1 - \alpha) (\vtheta_t - \vtheta_s)^\top \mathbf{H}_s (\vtheta_t - \vtheta_s) + \left( R_s - (1 - \alpha) R'_s \right).
\end{align}

Similarly, compute \( \delta_t \) by expanding around \( \vtheta_t \):
\begin{align}
\delta_t = -\frac{1}{2} \alpha (1 - \alpha) (\vtheta_t - \vtheta_s)^\top \mathbf{H}_t (\vtheta_t - \vtheta_s) + \left( R_t - \alpha R'_t \right),\nonumber\\
\quad \text{where} \quad \mathbf{H}_t = \nabla^2_\vtheta \gL(\vtheta_t; \gD_t).
\end{align}

Combining \( \delta_s \) and \( \delta_t \):
\begin{align}
\delta = \delta_s + \delta_t = -\frac{1}{2} \alpha (1 - \alpha) (\vtheta_t - \vtheta_s)^\top (\mathbf{H}_s + \mathbf{H}_t) (\vtheta_t - \vtheta_s) + \epsilon,\nonumber\\
\quad \text{where} \quad \epsilon = \left( R_s - (1 - \alpha) R'_s \right) + \left( R_t - \alpha R'_t \right).
\end{align}

Since the dominant Hessian eigenvalue \( \lambda_{\max}(\vtheta ; \gD) \) is the largest eigenvalue of \( \mathbf{H}(\vtheta; \gD) \), we have
\begin{align}
    (\vtheta_t - \vtheta_s)^\top \mathbf{H}_s (\vtheta_t - \vtheta_s) \leq \lambda_s \| \vtheta_t - \vtheta_s \|^2,\nonumber\\
    (\vtheta_t - \vtheta_s)^\top \mathbf{H}_t (\vtheta_t - \vtheta_s) \leq \lambda_t \| \vtheta_t - \vtheta_s \|^2,
\end{align}
where $\lambda_s = \lambda_{\max}(\vtheta_s; \gD_s), \lambda_t = \lambda_{\max}(\vtheta_t; \gD_t)$.
Then, \(|\delta|\) is bounded as:
\begin{equation}
\label{eq:delta_eig}
|\delta| \leq \frac{1}{2} \alpha (1 - \alpha) (\lambda_s + \lambda_t) \| \vtheta_t - \vtheta_s \|^2 + \epsilon,
\end{equation}
where \(\epsilon = \left( R_s - (1 - \alpha) R'_s \right) + \left( R_t - \alpha R'_t \right)\) is the remainder term.
\cite{agarwala2023sam} have demonstrated that SAM reduces the dominant Hessian eigenvalue throughout the learning trajectory.
According to Equation~\ref{eq:delta_eig}, the reduction of the dominant Hessian eigenvalue by SAFT leads to smaller $|\delta|$ and thus better satisfaction of joint-task loss linearity. 
\end{proof}

\end{document}